\documentclass[12pt]{article}
\usepackage{setspace}
\usepackage[utf8]{inputenc}
\usepackage{amssymb}
\usepackage{amsmath}
\usepackage{amsthm}
\usepackage{hyperref}
\usepackage{a4wide}
\usepackage{breqn}
\usepackage[table,xcdraw]{xcolor}
\usepackage{xcolor}
\usepackage{mathtools}
\usepackage{bm}
\usepackage{enumitem}
\usepackage{mathrsfs}
\usepackage{graphicx}
\usepackage{caption}
\usepackage{url}
\usepackage{euscript}
\usepackage{placeins}
\usepackage{mathdots}
\usepackage{multirow}
\usepackage{arydshln} 
\usepackage[english]{babel}
\usepackage{array}
\usepackage{subcaption}
\usepackage{booktabs}
\usepackage[numbers]{natbib}
\usepackage{cases}

\usepackage{graphics}
\usepackage{graphicx}
\usepackage{algorithm}
\usepackage{euscript}
\usepackage{mathrsfs}
\usepackage{algpseudocode}
\numberwithin{equation}{section}

\newtheorem{theorem}{\bf Theorem}[section]

\theoremstyle{remark}

\def \R{{\mathbb R}}
\def\bmatrix#1{\left[\begin{matrix}
		#1
	\end{matrix}\right]}

\def \x{\bm{x}}

\def\bmatrix#1{\left[ \begin{matrix} #1 \end{matrix} \right]}

\def \R{{\mathbb R}}

\title{
Physics-Informed Broad Learning System: An Efficient Backpropagation-Free Framework for Solving Partial Differential Equations
}
\author{Pinki Khatun\footnotemark[2] \footnotemark[3]  \and  M. Sajid\footnotemark[4]  \and Abhinav Jha\footnotemark[2] \and M. Tanveer\footnotemark[4]  }
\date{}
\begin{document}
	\maketitle	
	\begin{abstract}
    Physics-informed neural networks (PINNs) have emerged as a powerful paradigm for solving partial differential equations (PDEs) by embedding governing physical laws into deep neural networks. However, their reliance on computationally expensive gradient-based optimization and deep architectures often results in slow training, high computational cost, and limited scalability. In this work, we propose a novel physics-informed broad learning system (PI-BLS), the first physics-informed learning framework based on broad RdNNs. The proposed formulation embeds the governing differential operator and the associated initial and boundary constraints directly into a linear output-layer optimization problem, thereby replacing nonlinear gradient-based training with a deterministic least-squares solution obtained via the pseudoinverse. Consequently, the entire learning process is reduced to a single linear optimization stage while preserving the underlying physical constraints. As a result, PI-BLS offers an efficient learning paradigm for a physics-informed learning framework for solving PDEs that eliminates iterative backpropagation while preserving the underlying physical constraints. Experimental results on representative forward PDE benchmarks demonstrate that PI-BLS achieves competitive and often superior performance with reduced training time and model parameters compared with conventional PINNs. 
\end{abstract}

\textbf{Keywords:} Physics-Informed Neural Networks, Broad Learning System, Randomized Neural Networks, Least-Squares Optimization, Partial Differential Equations.

\textbf{MSC codes:} 35A35, 65F20, 68T05, 68T07.
     \footnotetext[2] {Department of Mathematics, Indian Institute of Technology Gandhinagar, Palaj, Gandhinagar, 382055, Gujarat, India (\tt{abhinav.jha@iitgn.ac.in}).}
      \footnotetext[3] {Department of Industrial Engineering, Università degli Studi di Firenze, Viale Morgagni 40/44, 50134 Firenze, Italy (\texttt{pinki.khatun@unifi.it}).}
         \footnotetext[4] {Department of Mathematics, Indian Institute of Technology Indore, Indore 453552, India (\texttt{phd2101241003@iiti.ac.in}, \texttt{mtanveer@iiti.ac.in}).}
     
     \section{Introduction}
Leveraging existing data for accurate modeling and prediction of partial differential equations (PDEs)-governed systems continues to be a challenging task, especially when dealing with ill-posed problems involving unknown parameters or boundary conditions, the presence of noise, and high computational demands.
%
%
%
The idea of employing neural networks as universal function approximators for solving PDEs dates back to the 1990s \cite{PINN1994}. Building upon this early concept, physics-informed machine learning has emerged over the past few decades as a powerful framework for addressing PDEs by integrating prior physical knowledge with available data \cite{PIML2021review}. A prominent realization of this paradigm is represented by physics-informed neural networks (PINNs), originally proposed in \cite{PINN1994, PINN1998}, which incorporate the governing physical laws directly into the loss function of neural networks.
PINNs extend the original idea by leveraging modern deep neural architectures, whose expressive power enables the representation of highly complex solution structures. Leveraging automatic differentiation \cite{AutoDifferential, paszke2017automatic}, the authors in \cite{PINN2019} introduced a practical framework for approximating solutions of PDEs in both forward and inverse settings. This development has been made possible by rapid advances in computational hardware, optimization algorithms, and large-scale training strategies, together with the widespread availability of automatic differentiation tools. 

 Being mesh-free and easy to implement, the PINN algorithm can be effectively applied to various classes of PDEs, including fractional PDEs \cite{pang2019fpinns}, integro-differential equations \cite{lu2021integro}, and stochastic PDEs \cite{zhang2020stochasticPDE, zhang2019stochasticPDE}. Consequently, PINNs have attracted significant attention and have been successfully applied to a wide range of scientific and engineering problems, such as fluid dynamics \cite{mao2020PINN}, heat transfer \cite{heat2021PINN}, biomedicine \cite{Biomedicin2020physics}, systems biology \cite{yazdani2020systems}, and electromagnetism \cite{electromagnetism_2023PINN}. By embedding the underlying PDEs as soft constraints within the learning process, PINNs enable neural networks to produce solutions that remain consistent with both observational data and fundamental physical principles, even in regimes where data are sparse or noisy. Since the development of PINNs, numerous extensions have been proposed to improve their accuracy, stability, and computational efficiency, such as gradient-enhanced PINNs \cite{wang2026gradient, yang2026gradient, gradientPINN}, extented PINN \cite{dd-xpinn2020, lee2025extended, zhu2025extended}, variational PINNs (vPINNs) \cite{berrone2022variational, eshaghi2025variational, los2025collocation} and hp-vPINNs \cite{hp-pinn2024}, which enhance solution stability by incorporating variational test spaces and domain decomposition-based schemes \cite{dd-xpinn2020, DD-PINN2020, wang2025exploring_DD}. See \cite{guo2025advances, review_hao2024pinnacle, Review2025} for the recent developments and comprehensive performance analysis.
 

 Despite their remarkable success, PINNs still face several fundamental challenges that limit their scalability and practical applicability. First, conventional PINNs rely on deep neural network architectures with a large number of trainable parameters, often resulting in complex optimization landscapes and slow convergence \cite{ krishnapriyan2021characterizing, wang2021understanding}. Second, training is performed using gradient-based iterative optimization algorithms, which are computationally expensive and frequently suffer from issues such as gradient pathologies and convergence to poor local minima, particularly for high-dimensional PDEs \cite{krishnapriyan2021characterizing, wang2021understanding,  wang2022when}. Third, the repeated computation of higher-order derivatives through automatic differentiation substantially increases both memory consumption and computational cost, making the training process highly dependent on modern GPU hardware and limiting its efficiency for large-scale scientific computing applications \cite{cuomo2022scientific, PINN2019}. Furthermore, the performance of PINNs is often sensitive to the selection of network architectures, loss-function weighting strategies, collocation-point sampling, and hyperparameter tuning, requiring considerable empirical effort to obtain reliable solutions \cite{cuomo2022scientific}. The aforementioned challenges motivate the search for efficient and scalable physics-informed learning frameworks that overcome the optimization and computational limitations of conventional PINNs while maintaining adherence to the underlying physical laws.

\section{Motivation and Contribution}
Randomized neural networks (RdNNs) constitute a class of feed-forward architectures \cite{RdNN1994, RdNN1992} developed to alleviate the computational burden and optimization difficulties commonly encountered in gradient-descent–based neural networks. The central idea behind these models is to introduce randomness into the network structure by fixing a subset of parameters, typically those associated with the hidden layers, while only a smaller set of parameters, usually in the output layer, is learned during training. As a result, the training process is significantly simplified, since the remaining parameters can often be obtained through efficient linear optimization procedures or even explicit closed-form solutions. By reducing the number of trainable parameters and avoiding expensive iterative backpropagation, RdNNs can achieve competitive performance with substantially lower computational cost and training time, while requiring far fewer computational resources.

Motivated by the need for computationally efficient learning algorithms, several alternative neural network paradigms have been proposed to improve training efficiency and scalability \cite{BLS2017, RdNN1994, wang2017stochastic}. Among these, the broad learning system (BLS) \cite{BLS2017} has emerged as a promising alternative to conventional deep neural networks. Built upon the principles of RdNNs, BLS replaces deep hierarchical architectures with a broad, flat network that expands horizontally rather than vertically. Specifically, it constructs expressive representations by integrating mapped feature nodes with nonlinear enhancement nodes, thereby enriching the feature space without increasing network depth \cite{chen2018universal}. Furthermore, the output weights are computed through efficient linear least-squares or pseudoinverse solutions, eliminating the need for computationally expensive iterative backpropagation. Consequently, BLS offers significantly faster training, lower computational complexity, and excellent scalability, making it an attractive candidate for scientific computing applications involving PDEs, where computational efficiency and rapid model adaptation are of paramount importance \cite{10530427, tanveer2025bls}. BLS expands the network in width, thereby avoiding many optimization challenges associated with deep models while maintaining strong approximation capability \cite{chen2018universal}. These characteristics naturally position BLS as a promising backbone for developing efficient physics-informed learning algorithms.

Despite these compelling advantages, existing BLS frameworks have been developed primarily for data-driven learning and do not explicitly incorporate the governing physical laws represented by PDEs. Consequently, they cannot be directly employed to solve forward PDE problems, where the predicted solution must simultaneously satisfy the governing equations and the prescribed initial and boundary conditions. This limitation reveals an important gap between efficient randomized learning models and physics-informed scientific machine learning.

Building upon the computational advantages of broad RdNNs, this paper proposes a novel \emph{physics-informed broad learning system (PI-BLS)} for solving forward PDEs. To the best of our knowledge, PI-BLS is the first physics-informed learning framework based on broad RdNNs for this class of problems. In the proposed framework, the PDE solution is approximated using a BLS architecture, where the feature and enhancement nodes are randomly generated and remain fixed throughout training. A physics-informed objective function is constructed by enforcing the governing PDE together with the prescribed boundary and initial conditions at collocation points. Consequently, the resulting learning problem is reformulated as a linear least-squares optimization problem with respect to the output-layer parameters, replacing the computationally expensive gradient-based optimization employed by conventional PINNs with an efficient closed-form learning strategy that completely eliminates backpropagation. This formulation yields a computationally efficient, scalable, and physically consistent framework for solving forward PDEs.





\vspace{2mm}
The main contributions of this work are summarized as follows:
\begin{itemize}

\item We propose a novel PI-BLS, which, to the best of our knowledge, is the first physics-informed learning framework based on broad RdNNs for solving forward PDEs.

\item We develop a new physics-informed learning formulation that embeds the governing PDE together with the associated boundary and initial conditions directly into the BLS, enabling the approximation of PDE solutions within a randomized broad neural architecture.

\item We reformulate the resulting learning problem as a linear least-squares optimization with respect to the output-layer parameters, allowing the model to be trained without iterative gradient-based backpropagation while preserving the underlying physical constraints.

\item We present a theoretical analysis of PI-BLS by deriving its stability property, residual error bound, sensitivity to perturbations, and physics consistency, providing rigorous guarantees for the robustness and physical fidelity of the proposed framework.

\item We evaluate the proposed PI-BLS framework on a diverse set of forward PDE benchmark problems, including one-dimensional steady-state, two-dimensional Poisson, and diffusion--reaction equations, and compare its performance against existing baselines in terms of solution accuracy, training losses, computational time, and model complexity.

\end{itemize}

The remainder of this paper is organized as follows.  Section \ref{SEC2} reviews the existing literature on BLS and PINNs.  Section \ref{SEC3} presents the proposed PI-BLS framework and its mathematical formulation.  Section \ref{Sec:theoretical_analysis} gives details of the theoretical properties and observations of the PI-BLS. Section \ref{SEC_experiment} reports numerical experiments and discusses the performance of proposed PI-BLS and comparison with the existing ones for several case studies.  Section \ref{SEC:Conclusion} provides the concluding remarks of the paper.

\medskip

\section{Brief Overview of BLS and PINN}\label{SEC2}
This section briefly introduces the BLS and the PINN architectures. BLS is a wide neural network model that enables efficient learning through a least-squares-based training strategy, while PINNs incorporate governing physical laws into the loss function to solve PDEs. These two architectures form the basis of the proposed approach.
%
Before proceeding, we introduce the following useful notation.

\medskip
\textbf{Notation:} We denote by $\mathbb{R}^{m \times n}$ the set of all $m \times n$ real matrices and by $\mathbb{R}^n$ the set of $n$-dimensional real vectors. 
The superscript $(\cdot)^T$ indicates the transpose of a matrix or vector.
By $\bmatrix{A & B}\in \R^{m\times (n+p)},$ we denote the concatenation of the matrices $A\in \R^{m\times n}$ and $B\in \R^{m\times p}.$ The symbol $I$ represents the identity matrix of appropriate dimension. 

\subsection{Broad Learning System}
In this subsection, we present the architecture of the BLS. Let $\{(X, Y)| X \in \mathbb{R}^{ N\times m}, \,Y \in \mathbb{R}^{ N}\}$ be the training dataset, where $X \in \mathbb{R}^{ N\times m}$ is the input matrix and $Y \in \mathbb{R}^{ N}$ is the output matrix.

The BLS is composed of three key components: (1) mapping feature nodes, (2) enhancement feature nodes, and (3) the output matrix. The training process of the BLS involves two main stages. First, the weights for both the mapping and enhancement nodes are randomly generated, eliminating the need for iterative tuning in these layers. Second, the weights connecting the hidden layer to the output layer are computed analytically, typically using a least-squares solution. In essence, only the final output weights are learned during training, while the rest of the network remains fixed. This significantly reduces computational cost and training time. 


\begin{figure}
    \centering
    \includegraphics[width=0.75\linewidth]{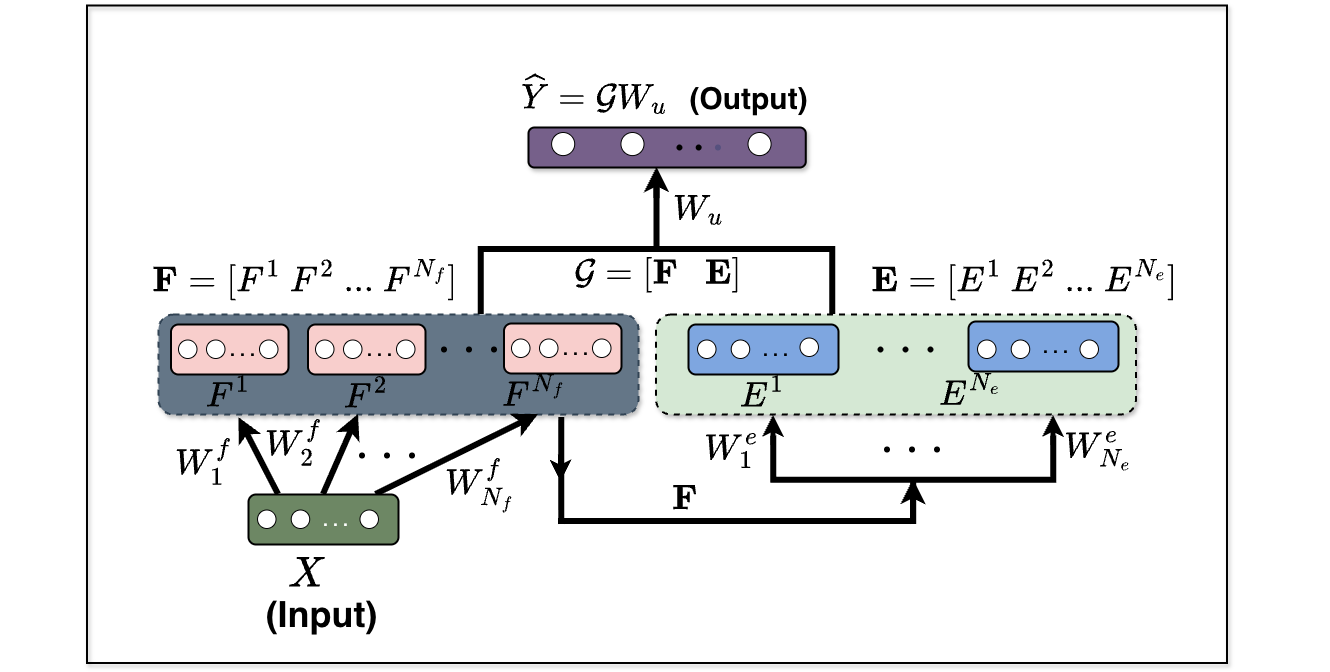}
    \caption{Architecture of the BLS}
    \label{fig:BLS}
\end{figure}

The architecture of the BLS \cite{BLS2017} is shown in Figure \ref{fig:BLS}. Let there be \( N_c \) feature layers, each consisting of \( p \) nodes. For the \( i^{\text{th}} \) feature layer, let \( W_i^f \in \mathbb{R}^{m \times p} \) and \( b_i^f\in \R^{1\times p} \) denote the randomly generated weight matrix and bias vector, respectively. The output of the \( i^{\text{th}} \) feature layer is then defined as:
\begin{align}
    F^i = \psi_i(XW_i^f + \bm{1}^T\otimes b_i^f) \in \mathbb{R}^{N \times p}, \quad i = 1, 2, \ldots, N_c,
\end{align}
where $\bm{1}\in \R^{N}$ is the vector of all ones and \( \psi_i \) is the activation function (feature mapping)  associated with the \( i^{\text{th}} \) feature group, applied elementwise to the matrix $\mathbf{F}W_i^f+b_i^f$.  The symbol $\otimes$ denotes the Kronecker product. The following augmented matrix gives the output of the feature group:
\begin{align}
    \mathbf{F}=\bmatrix{F^1& F^2& \ldots& F^{N_c}}\in \R^{N\times pN_c}.
\end{align}

In the next step, the augmented feature matrix is projected into the enhancement space through random transformations, followed by the application of activation functions. Let 
$N_e$ denote the number of enhancement layers, with each group consisting of 
$q$ nodes. Further, let \( W_i^e \in \mathbb{R}^{pN_c \times q} \) be the randomly generated weight matrix connecting the  feature matrix \( \mathbf{F} \) to the \( i^{\text{th}} \) enhancement group, and let \( b_i^e \in \R^{1\times q}\) denote the corresponding bias vector for the \( i^{\text{th}} \) enhancement group.
Then
\begin{align}
    E^i = \phi_i(\mathbf{F}W_i^e + \bm{1}\otimes b_i^e) \in \mathbb{R}^{N \times q}, \quad i = 1, 2, \ldots, N_e,
\end{align}
where $\phi_i$ is the activation function used to generate nonlinear
features, applied elementwise to the matrix $\mathbf{F}W_j^e+b_j^e$. The output of the $N_e$ enhancement feature groups is 
\begin{align}
    \mathbf{E}=\bmatrix{E^1& E^2& \ldots& E^{N_e}}\in \R^{N\times qN_e}.
\end{align}
Finally, the resulting enhancement matrix \(\mathbf{E} \), together with the augmented feature matrix \( \mathbf{F} \), is passed to the output layer, where the resulting output of the network is computed as follows:
\begin{align}
    \widehat{Y}=\mathcal{G}W_{u},
\end{align}
 where $\mathcal{G}=\bmatrix{\mathbf{F}& \mathbf{E}}\in \R^{N\times (pN_c+qN_e)}$ denotes the concatenated matrix of the augmented feature layer and the resultant enhancement layer, and $W_{u}\in \R^{(pN_c+qN_e)\times 1}$
  is the randomly generated weight matrix that connects this combined representation to the output layer.
Finally, the training of the BLS is completed by estimating the output weight matrix through the following least-squares optimization problem:
\begin{equation}
\min_{W_u} \; \|\mathcal{G}W_u-Y\|_2^2.
\end{equation}
\subsection{Physics-Informed Neural Network}

Physics-informed neural networks (PINNs) are a class of machine learning methods that incorporate domain-specific knowledge, typically represented by PDEs—into the neural network training process. The central idea of PINNs is to embed the governing physical laws as constraints, ensuring that the neural network's predictions adhere to the underlying system dynamics during training.
\begin{figure}
    \centering
    \includegraphics[width=0.65\linewidth]{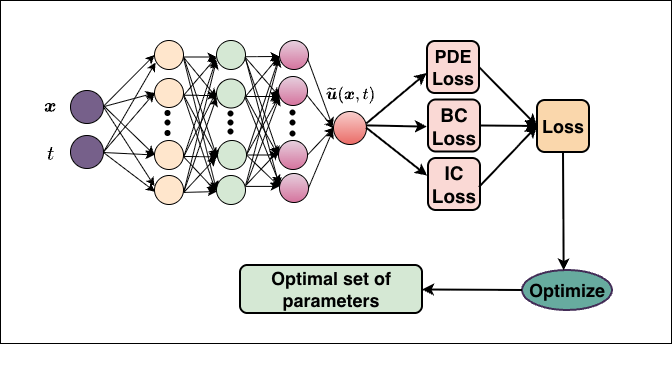}
    \caption{Framework of PINN}
    \label{fig:PINN}
\end{figure}
We consider the following  form of a general  PDE:
\begin{align}\label{pde1}
   & \frac{\partial}{\partial t} \bm{u}(\x,t)+\mathcal{N}_{\bm{x}}[\bm{u}(\bm{x},t)]=R(\bm{x},t),~~ \bm{x}\in \Omega, ~\text{and}~ t\in[0,T]\\ \label{pde2}
   & \mathcal{B}[\bm{u}(\bm{x},t)]=B(\bm{x},t), ~~\bm{x}\in \partial \Omega, ~\text{and}~t\in [0,\, T],\\ \label{pde3}
   & \bm{u}(\bm{x},0)=\mathcal{I}(\bm{x}), ~\bm{x}\in\Omega, 
\end{align}
where $\bm{x}\in \R^d$, $t\in [0,\, T],$ and $\mathcal{N}_{\x}[\bm{u}]$ and $\mathcal{B}[\bm{u}]$ are the potentially nonlinear differential operator and the boundary condition operator, respectively, and $\partial \Omega$ is the boundary of the computation domain $\Omega.$

The framework for PINN is given in Figure \ref{fig:PINN}.  The PINN is employed to approximate the 
  solution $\bm{u}(x,t)$ by $\widetilde{\bm{u}}(x,t)$, where the network takes $(\bm{x}, t)$ as inputs. Let the collocation points at the interior of the domain be denoted by $\{\bm{x}_{i,u}, t_{i,u}\}_{i=1}^{N_c}$, the boundary points by $\{\bm{x}_{i, b}, t_{i, b}\}_{i=1}^{N_b}$, and the initial points $\{\bm{x}_{i, I}, 0\}_{i=1}^{N_I}$, where $N_{u},$ $N_{b}$ and $N_{I}$ denote the number of training collocation points, boundary points, and initial condition points, respectively. The residuals associated with the PDE, boundary conditions, and initial conditions are denoted by $\mathcal{J}_{u},$ $\mathcal{J}_{b},$ and $\mathcal{J}_{I},$ respectively, and given as follows:
  \begin{align}\label{erro_PDE}
&\mathcal{J}_{u}(\bm{x}, t)=\frac{\partial \widetilde{\bm{u}}}{\partial t}(\bm{x}, t)+ \mathcal{N}_{\bm x}[\widetilde{\bm{u}}](\bm{x}, t)-{R}(\bm{x}, t)~~\text{on}~~\{\bm{x}_{i,u}, t_{i,u}\}_{i=1}^{N_c}, \\ \label{error_BC}
&\mathcal{J}_{b}(\bm{x}, t)=\mathcal{B}[\widetilde{\bm{u}}](\bm{x}, t)-{B}(\bm{x}, t)~~ \text{on}~~\{\bm{x}_{i,b}, t_{i,b}\}_{i=1}^{N_b},\\ \label{error_IC}
&\mathcal{J}_{I}(\bm{x}, 0)=\widetilde{\bm{u}}(\bm{x},0)-\mathcal{I}(\bm{x})~~\text{on}~~ \{\bm{x}_{i,I}, 0\}_i^{N_I}.
  \end{align}
Therefore, the loss terms due to the PDE, boundary conditions, and initial conditions, respectively,  are given by \begin{align}\label{erro1_PDE}
&\mathcal{J}_{u}=\sum_{i=1}^{N_c}\left|\frac{\partial \widetilde{\bm{u}}}{\partial t}(\bm{x}_{i,u}, t_{i,u})+ \mathcal{N}_{\bm x}[\widetilde{\bm{u}}(\bm{x}_{i,u}, t_{i,u})]-R(\bm{x}_{i,u}, t_{i,u}) \right|^2, \\ \label{error1_BC}
&\mathcal{J}_{b}=\sum_{i=1}^{N_b}\left|\mathcal{B}[\widetilde{\bm{u}}](\bm{x}_{i,b}, t_{i,b})-B(\bm{x}_{i,b}, t_{i,b})\right|^2,\\ \label{error1_IC}
&\mathcal{J}_{I}=\sum_{i=1}^{N_I}\left|\widetilde{\bm{u}} (\bm{x}_{i,I}, 0)-\mathcal{I}(\bm{x}_{i,I})\right|^2.
  \end{align}
 Accordingly, the total loss to be minimized for a PINN is given by:
  \begin{align}
      \mathcal{J}:=\frac{1}{2N_{u}} \mathcal{J}_{u}+\frac{1}{2N_{b}} \mathcal{J}_{b}+\frac{1}{2N_{I}} \mathcal{J}_{b}.
  \end{align}
   The partial derivatives involved in the loss function can be efficiently computed using automatic differentiation \cite{AutoDifferential, paszke2017automatic}.
By applying a gradient-based optimization algorithm, we can minimize the loss function $\mathcal{J}$ to determine the optimal weights and biases of the neural network.

\section{Proposed Physics-Informed Broad Learning System}\label{SEC3}
In this section, we introduce the physics-informed broad learning system (PI-BLS), which integrates the principles of PINNs with BLS to solve linear PDEs.  The core idea is to incorporate physical laws directly into the BLS framework as part of the cost function. This transforms linear PDE problems into a regularized linear least-squares problem, allowing the solution to simultaneously satisfy the governing equations and boundary/initial conditions in a computationally efficient manner. In the following, we discuss the mathematical architecture of PI-BLS.

Consider the general PDE problem \eqref{pde1}-\eqref{pde3}. The architecture of the PI-BLS  is shown in Figure \ref{fig:PI-BLS}.
Define $X=[\bm{x}^T,\,t]\in \R^{1 \times (d+1)},$ where $\bm{x}\in \R^{d}$ and $t\in \R$. Let the network have $N
_f$ feature layers with $p$ feature nodes in each layer. The feature layers are denoted by $F^i,$ $i=1,2,\ldots, N_c$. Further, let $W^f_i\in \R^{(d+1)\times p}$ be the weight matrix and $b^f_i\in \R^{1\times p}$ be the biases in the $i^{th}$ feature layer. Then the output of the feature layer segment is given by
\begin{align}
    \textbf{F}=\bmatrix{F^1& F^2& \ldots& F^{N_c}}\in \R^{1 \times pN_c},
\end{align}
where $F^i(X)=\psi_i(X W^f_{i}+b_i^f)\in \R^{1\times p}$ is the output in the  $i^{th}$ physics-informed feature layer, and $\psi_i: \mathbb{R}\rightarrow \mathbb{R}$ are the feature maps or nonlinear activation functions. 
\begin{figure}
    \centering
    \includegraphics[width=0.8\linewidth]{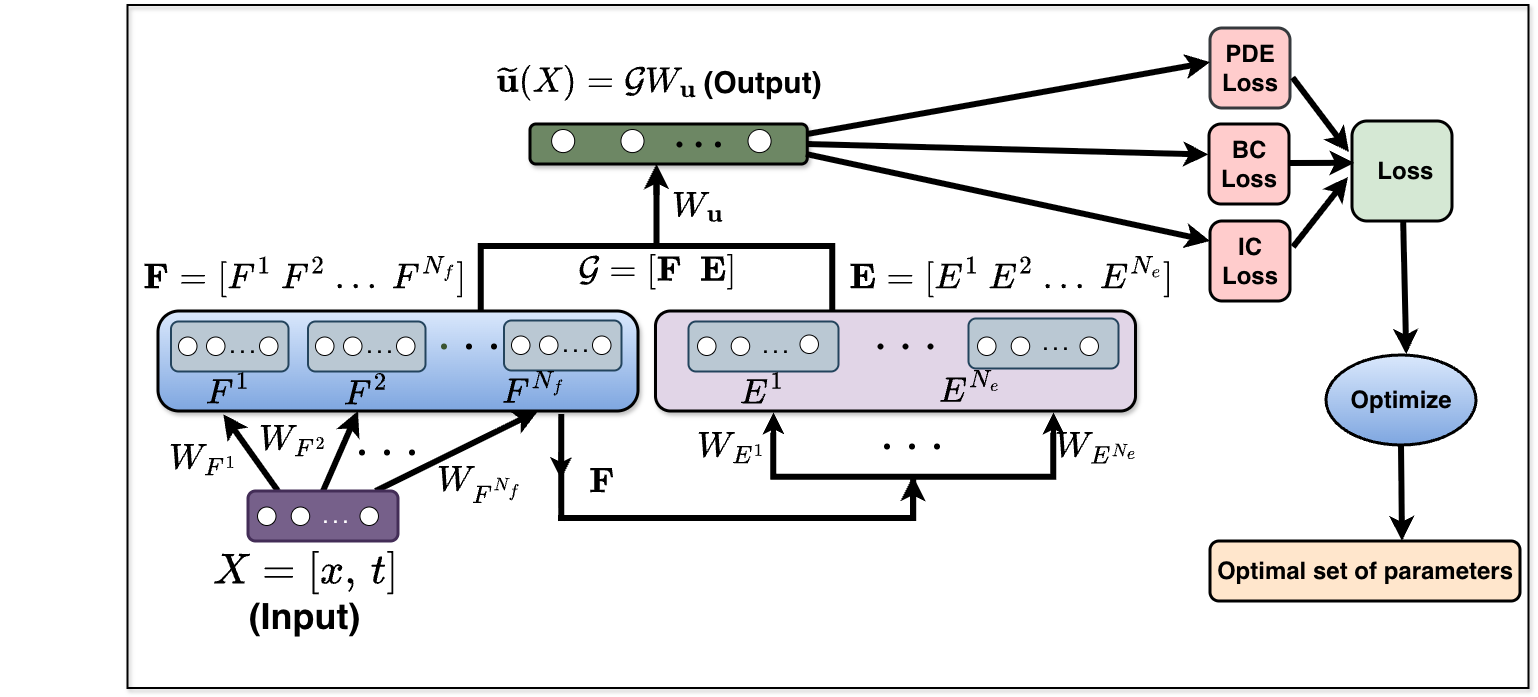}
    \caption{Framework of the proposed PI-BLS}
    \label{fig:PI-BLS}
\end{figure}

Let the network consist of \( N_e \) enhancement layers, each containing \( q \)  enhancement nodes. These enhancement layers are denoted by \( E^j \), for \( j = 1, 2, \ldots, N_e \). Let \( W^e_j \in \mathbb{R}^{pN_c \times q} \) be the weight matrix and \( b^e_j \in \mathbb{R}^{1 \times q} \) be the bias vector in the \( j^{\text{th}} \) enhancement layer. Then, the output of the enhancement layer is given by:

\begin{align}
   \textbf{E}=\bmatrix{E^1& E^2& \ldots& E^{N_e}}\in \R^{1 \times qN_e},
\end{align}
where
\[
E^j(X)=\phi_j(\mathbf{F}W_j^e+b_j^e)\in\mathbb{R}^{1\times q},
\]
with $\phi_j:\mathbb{R}\rightarrow\mathbb{R}$ being  nonlinear activation functions.
The final output of the network denoted by $\widetilde{\bm{u}}$ and is given by 
\begin{align}
   \widetilde{\bm{u}}(X; W_u)= \mathcal{G}W_{ \bm u},
\end{align}
where $\mathcal{G}=[\bm{F}\,~ \bm{E}]\in \R^{1 \times (pN_c+qN_e)}$ and $W_{\bm u }\in \R^{(pN_c+qN_e) \times 1}.$ Hence, $\widetilde{\bm{u}}(X; W_u)$ is the   approximate solution  of the PDE \eqref{pde1}--\eqref{pde3} obtained using the network.


Let $N_{u},$ $N_b$, and $N_I$ denote the numbers of collocation points, boundary points, and initial conditions, respectively, due to training. The error functions in approximating the PDE, boundary conditions, and initial conditions are the same as in \eqref{erro_PDE}, \eqref{error_BC}, and \eqref{error_IC}. 
Hence, the loss functions to be minimized for a PI-BLS are given by:
\begin{align}
&\mathcal{J}_{u}(\bm{x}, t)=\frac{\partial (\mathcal{G}W_u)}{\partial t} (\bm{x}, t)+ \mathcal{N}_{\bm x}[\mathcal{G}W_u](\bm{x}, t)-{R}(\bm{x}, t)~~\text{on}~~\{\bm{x}_{i, u}, t_{i, u}\}_{i=1}^{N_c}, \\ 
&\mathcal{J}_{b}(\bm{x}, t)=\mathcal{B}[\mathcal{G}W_u](\bm{x}, t)-{B}(\bm{x}, t)~~ \text{on}~~\{\bm{x}_{i,b}, t_{i,b}\}_{i=1}^{N_b},\\ 
&\mathcal{J}_{I}(\bm{x})=\mathcal{G}W_u(\bm{x},0)-\mathcal{I}(\bm{x})~~\text{on}~~ \{\bm{x}_{i,I}, 0\}_i^{N_I}.
  \end{align}
For convenience, we define the corresponding data matrices as: \[X_{\mathrm{col}}= [(\bm{x}_{1,u}, t_{1, u}), (\bm{x}_{2, u}, t_{2, u}), \ldots, (\bm{x}_{N_{ u}, u}, t_{N_{u}, u})]^T, \]
\[X_{b}= [(\bm{x}_{1, b}, t_{1, b}), (\bm{x}_{2, b}, t_{2, b}), \ldots, (\bm{x}_{N_{b}, b}, t_{N_{b}, b})]^T, \]
and \[X_{I}= [(\bm{x}_{1, I}, 0), (\bm{x}_{2,I}, 0), \ldots, (\bm{x}_{N_{I}, I}, 0)]^T.\]
  \begin{algorithm}
\caption{Training process of proposed PI-BLS for PDE problems}
\label{alg:1}

\begin{algorithmic}[1]
\State \textbf{Input:} Dimension of the the input $X,$ number of feature groups $N_c,$ number of feature nodes $p,$ number of enhancement groups $N_e,$ number of enhancement nodes $q,$ hyperparameters $\lambda_b,\, \lambda_I, \lambda_{\mathrm{reg}}>0$
\State \textbf{Output:} Output weight matrix $W^*_{u}$/approximate solution $\widetilde{\bm{u}}(X; W^*_u).$

\State \textbf{Generate} collocation points $X_{\mathrm{col}},$ boundary points $X_{b}$ and initial points $X_{I}.$
\State \textbf{Feature Layer Construction:}
\For{$i = 1$ to $N_c$}
    \State Generate random weight matrix $W_i^{f} \in \mathbb{R}^{d \times p}$ and bias vector ${b}_i^{f} \in \mathbb{R}^{1 \times p}$.
    \State Compute feature mapping:
    \(
    F^{i} = \psi_i(XW_i^{f} + {b}_i^{f}).
    \)
\EndFor
\State Concatenate feature outputs
\(
\mathbf{F} = \bmatrix{
F^1 & F^2 & \cdots & F^{N_c}
}.
\)

\State \textbf{Enhancement Layer Construction:}
\For{$i = 1$ to $N_e$}
    \State Generate random weight matrix $W_i^{e} \in \mathbb{R}^{(pN_c) \times q}$ and bias vector $b_i^{e} \in \mathbb{R}^{1 \times q}$.
    \State Compute enhancement mapping: 
    \(
    \mathbf{E}^{i} = \phi_i(\mathbf{F}{W}_i^{e} + b_i^{e}).
    \)
\EndFor
\State Concatenate enhancement outputs
\(
\mathbf{E} = \bmatrix{
E^1 &E^2 & \cdots & E^{N_e}
}.
\)
\State Compute $\mathcal{G}=\bmatrix{\bm{F} & \bm{E}}$
\State Compute \begin{align*}
H_{\mathrm{col}} &= \frac{\partial \mathcal{G}}{\partial t} (X_{\mathrm{col}}) + \mathcal{N}_{\bm{x}}[\mathcal{G}](X_{\mathrm{col}}), ~~H_b = \mathcal{B}[\mathcal{G}](X_b), ~~H_I = \mathcal{G}(X_I).
\end{align*}
\State Construct the matrices \[\mathcal{H} = \bmatrix{H_{\text{col}} \\ \sqrt{\lambda_b}H_{b}\\ \sqrt{\lambda_I} H_{I}}~~ \text{and}~~ Z=\bmatrix{R_{\text{col}} \\ \sqrt{\lambda_b}R_{b}\\ \sqrt{\lambda_I} R_{I}}.\]
\State Compute output layer weight matrix $W_u$ using \eqref{eq:Wu}. 
\end{algorithmic}
\end{algorithm}

 When $\mathcal{N}_{\bm{x}}[\,\cdot \,]$ and  $\mathcal{B}[\,\cdot\,]$  are linear differential operators, then the proposed minimization problem is expressed as:
\begin{align}
    \arg\min_{W_u}~ \Bigg(&\|H_{\text{col}} W_u-R(X_{\text{col}})\|_2^2+ \lambda_b\, \|H_{b} W_u- B(X_b)\|^2_2 \\
    &+\lambda_I \,\|H_{I} W_u- \mathcal{I}(X_I)\|^2_2 +\lambda_{\mathrm{reg}}\, \|W_u\|_2^2\Bigg),
\end{align}
where 
\begin{align}
H_{\mathrm{col}} &= \frac{\partial \mathcal{G}}{\partial t} (X_{\mathrm{col}}) + \mathcal{N}_{\bm{x}}[\mathcal{G}](X_{\mathrm{col}})\in \R^{N_{u} \times (pN_c +q N_e)}, \\
H_b &= \mathcal{B}[\mathcal{G}](X_b)\in \R^{N_{b} \times (pN_c +q N_e)}, \\
H_I &= \mathcal{G}(X_I) \in \R^{N_{I} \times (pN_c +q N_e)}.
\end{align}
The hyperparameters $\lambda_b,\, \lambda_I, \lambda_{\mathrm{reg}}>0$  are scalar weights used to balance the loss function.
The above optimization problem can be compactly expressed as:
\begin{align}
   W^*_u =  \arg\min_{W_u} ~ \|\mathcal{H} W_u- Z\|_2^2 \,+ \, \lambda_{\mathrm{reg}}\, \|W_u\|_2^2,
\end{align} 
where   \[\mathcal{H}=\bmatrix{H_{\text{col}} \\ \sqrt{\lambda_b}H_{b}\\ \sqrt{\lambda_I} H_{I}} \in \R^{N_u\times (pN_c+ qN_e)} ~\text{and} ~Z =\bmatrix{R_{\text{col}} \\ \sqrt{\lambda_b}R_{b}\\ \sqrt{\lambda_I} R_{I}} \in \R^{N_u},\] 
and $N_u=(N_c+N_b +N_I ).$ The matrices $\mathcal{H} $ and $Z$ depend upon the physics law of PDE, boundary conditions, and initial conditions. For this optimal weight matrix in the output layer, PI-BLS approximate solution is defined as $\widetilde{\bm u}(X; W^*_u).$

  %
  The weight matrix $W_u$ at the output layer can be calculated by the least squares method and is given by 
  \begin{align}\label{eq:Wu}
  W^*_u=\left\{\begin{array}{cc}
    (\mathcal{H}^T\mathcal{H} +\lambda_{\mathrm{reg}} I)^{-1} \mathcal{H}^T Z,   & N\leq (pN_c + qN_q), \\[4pt]
       \mathcal{H}^T (\mathcal{H} \mathcal{H}^T +\lambda_{\mathrm{reg}} I)^{-1} \mathcal{H}^T Z, & N> (pN_c + qN_q).
  \end{array}\right.
  \end{align}
Algorithm  \ref{alg:1} gives the step-by-step implementation of the training process for a general linear PDE. We use automatic differentiation \cite{AutoDifferential, paszke2017automatic} to find the derivative of the approximate function.

\subsection{Computational Complexity Analysis}

\begin{table}[!t]
\centering
\caption{Computational complexity comparison of different physics-informed learning frameworks. }
\label{tab:complexity}
\resizebox{12cm}{!}{
\begin{tabular}{lll}
\hline
\textbf{Method} & \textbf{Training Strategy} & \textbf{Computational Complexity} \\
\hline
PINN &
Gradient-based optimization &
$\mathcal{O}\!\left(TN_uL_hN_h^2\right)$ \\

PI-BLS &
Regularized least-squares &
$\mathcal{O}\!\left(N_uM^2+M^3\right)$ \\

PI-ELM &
Least-squares &
$\mathcal{O}\!\left(N_u N_h^2+N_h^3\right)$ \\

B-PI-ELM &
Bayesian least-squares &
$\mathcal{O}\!\left(N_u N_h^2+N_h^3\right)$ \\
\hline
\end{tabular}}
\end{table}

The computational complexity of the proposed PI-BLS framework is compared with those of PINN, and  RdNN-based physics-informed extreme learning machine (PI-ELM) \cite{PIELM2020}, and Bayesian PI-ELM (B-PI-ELM) \cite{PINN2023bayesian}. Let $N_u$ denote the total number of collocation points, including interior, boundary, and initial points. For PINN, let $L_h$ and $N_h$ denote the number of hidden layers and hidden nodes per layer, respectively, and let $T$ denote the total number of optimization iterations. For PI-BLS, let $M=pN_c+q,$ where $N_e=1$, $N_c\times p$ represents the total number of mapped feature nodes, and $q$ denotes the number of enhancement nodes.
For a conventional PINN, the dominant computational cost arises from repeated forward propagation and backpropagation over all trainable parameters during each optimization iteration. Assuming a fully connected network, the overall training complexity can be approximated as
$\mathcal{O}_{\text{PINN}}
=
\mathcal{O}\!\left(TN_uL_h N_h^2\right)$.
In contrast, the hidden-layer parameters of PI-BLS remain fixed after random initialization, and only the output-layer weights are estimated through a regularized least-squares problem. The computational cost consists of constructing the hidden-layer output matrix, forming the normal equations, and solving the resulting linear system, yielding
$\mathcal{O}_{\text{PI-BLS}}
=
\mathcal{O}\!\left(N_uM^2+M^3\right)$.
Similarly, both PI-ELM  and B-PI-ELM  employ randomized hidden layers and determine the output weights through linear least-squares optimization. Their dominant computational complexity is therefore given by
$\mathcal{O}_{\text{PI-ELM}}
=
\mathcal{O}_{\text{B-PI-ELM}}
=
\mathcal{O}\!\left(N_uN_h^2+ N_h^3\right)$.

The computational complexities of all competing methods are summarized in Table~\ref{tab:complexity}. It can be observed that the complexity of PINN scales linearly with the number of optimization iterations, making the training process considerably more expensive for large-scale problems. In contrast, PI-BLS, PI-ELM, and B-PI-ELM require only a single least-squares optimization, eliminating iterative backpropagation altogether. Consequently, PI-BLS achieves computational complexity of the same order as PI-ELM and B-PI-ELM while maintaining a richer broad learning architecture.

\section{Theoretical Properties of PI-BLS} \label{Sec:theoretical_analysis}
This section establishes the theoretical foundations of the proposed PI-BLS framework. Specifically, we analyze its numerical stability, error bound, sensitivity to perturbations, and consistency with the underlying governing physics. Together, these theoretical results demonstrate the soundness of the proposed formulation and provide rigorous guarantees for its application to forward PDE problems.
\begin{theorem}[\textbf{Stability of the PI-BLS Solution}]
Let $\mathcal{H}\in\mathbb{R}^{N_u\times M}$ denote the hidden-layer output matrix of the proposed PI-BLS. Let ${W}_{u}\in\mathbb{R}^{M}$ denote the output-weight vector obtained by minimizing the regularized physics-informed objective
\begin{equation}
\mathcal{J}({W}_{u})
:=
\left\|
\mathcal{H}{W}_{u}
-
Z
\right\|_{2}^{2}
+
\lambda_{\mathrm{reg}}
\|
{W}_{u}
\|_{2}^{2},
\end{equation}
where $Z\in\mathbb{R}^{N_u}$ is the target vector containing the PDE residuals together with the prescribed boundary and initial conditions, and
$\lambda_{\mathrm{reg}}>0$ is the regularization parameter.

Then the PI-BLS solution is stable in the sense that
\begin{equation}
\|
{W}_{u}
\|_{2}
\le
\frac{\|
\mathcal{H}
\|_{2}}
{\lambda_{\mathrm{reg}}}
\,
\|
Z
\|_{2}.
\end{equation}
Hence, every bounded physics-informed target vector produces a bounded output-weight vector.
\end{theorem}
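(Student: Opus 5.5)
The plan is to write down the minimizer explicitly and then bound the norm of the resolvent. Since $\lambda_{\mathrm{reg}}>0$, the objective $\mathcal{J}$ is strictly convex and coercive, so it has a unique minimizer. Setting the gradient $2\mathcal{H}^T(\mathcal{H}W_u-Z)+2\lambda_{\mathrm{reg}}W_u$ to zero gives the normal equations
\begin{equation*}
(\mathcal{H}^T\mathcal{H}+\lambda_{\mathrm{reg}}I)\,W_u=\mathcal{H}^TZ .
\end{equation*}
The matrix $\mathcal{H}^T\mathcal{H}+\lambda_{\mathrm{reg}}I$ is symmetric positive definite, with all eigenvalues at least $\lambda_{\mathrm{reg}}$, so it is invertible. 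Hence $W_u=(\mathcal{H}^T\mathcal{H}+\lambda_{\mathrm{reg}}I)^{-1}\mathcal{H}^TZ$, which is the first branch of \eqref{eq:Wu}.

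Next I bound the inverse in the spectral norm. Because the matrix is symmetric with smallest eigenvalue $\lambda_{\min}(\mathcal{H}^T\mathcal{H})+\lambda_{\mathrm{reg}}\ge\lambda_{\mathrm{reg}}$, its inverse satisfies
\begin{equation*}
\|(\mathcal{H}^T\mathcal{H}+\lambda_{\mathrm{reg}}I)^{-1}\|_2\le 1/\lambda_{\mathrm{reg}} .
\end{equation*}
Submultiplicativity together with $\|\mathcal{H}^T\|_2=\|\mathcal{H}\|_2$ then gives
\begin{equation*}
\|W_u\|_2\le \frac{1}{\lambda_{\mathrm{reg}}}\,\|\mathcal{H}^T\|_2\,\|Z\|_2=\frac{\|\mathcal{H}\|_2}{\lambda_{\mathrm{reg}}}\,\|Z\|_2,
\end{equation*}
which is the claimed bound. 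The final sentence of the theorem follows at once, since $\mathcal{H}$ is a fixed matrix once the random hidden weights and collocation points are chosen.

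I do not expect a genuine obstacle. The only delicate point is making sure the bound does not depend on which branch of \eqref{eq:Wu} is used. I will note that the push-through identity $(\mathcal{H}^T\mathcal{H}+\lambda_{\mathrm{reg}}I)^{-1}\mathcal{H}^T=\mathcal{H}^T(\mathcal{H}\mathcal{H}^T+\lambda_{\mathrm{reg}}I)^{-1}$ shows both branches describe the same minimizer, so the argument above covers both. As a remark, an SVD of $\mathcal{H}$ gives the sharper bound $\|W_u\|_2\le\max_i \sigma_i/(\sigma_i^2+\lambda_{\mathrm{reg}})\,\|Z\|_2\le \|Z\|_2/(2\sqrt{\lambda_{\mathrm{reg}}})$, but this refinement is not needed for the statement.
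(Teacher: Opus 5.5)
Your proposal is correct and follows the paper's proof essentially step for step: the normal equations $(\mathcal{H}^T\mathcal{H}+\lambda_{\mathrm{reg}}I)W_u=\mathcal{H}^TZ$, the eigenvalue bound $\|(\mathcal{H}^T\mathcal{H}+\lambda_{\mathrm{reg}}I)^{-1}\|_2\le 1/\lambda_{\mathrm{reg}}$, and $\|\mathcal{H}^T\|_2=\|\mathcal{H}\|_2$ with submultiplicativity. Your extra remarks are also correct but not needed by the paper: uniqueness via strict convexity, the push-through identity (which matches the intended second branch of \eqref{eq:Wu}; the printed version has a spurious extra factor $\mathcal{H}^T$), and the sharper SVD bound $\|Z\|_2/(2\sqrt{\lambda_{\mathrm{reg}}})$.
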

\begin{proof}
The objective function is given by
\begin{equation}
\mathcal{J}({W}_{u})
=
\|
\mathcal{H}{W}_{u}
-
Z
\|_{2}^{2}
+
\lambda_{\mathrm{reg}}
\|
{W}_{u}
\|_{2}^{2}.
\end{equation}
Since $\mathcal{J}({W}_{u})$ is continuously differentiable, its minimizer satisfies the first-order optimality condition
\begin{equation}
\nabla \mathcal{J}({W}_{u})
=
2\mathcal{H}^{T}
(\mathcal{H}{W}_{u}-Z)
+
2\lambda_{\mathrm{reg}}{W}_{u}
=
0.
\end{equation}
Rearranging the above equation gives
\begin{equation}
(\mathcal{H}^{T}\mathcal{H}
+
\lambda_{\mathrm{reg}}I)
{W}_{u}
=
\mathcal{H}^{T}Z,
\end{equation}
and therefore
\begin{equation}
{W}_{u}
=
(\mathcal{H}^{T}\mathcal{H}
+
\lambda_{\mathrm{reg}}I)^{-1}
\mathcal{H}^{T}Z.
\end{equation}
Taking the Euclidean norm on both sides yields
\begin{align}
\|
{W}_{u}
\|_{2}
&\le
\|
(\mathcal{H}^{T}\mathcal{H}
+
\lambda_{\mathrm{reg}}I)^{-1}
\|_{2}
\,
\|
\mathcal{H}^{T}
\|_{2}
\,
\|
Z
\|_{2}.
\end{align}
Since
\(
\lambda_{\min}
(\mathcal{H}^{T}\mathcal{H}
+
\lambda_{\mathrm{reg}}I)
\ge
\lambda_{\mathrm{reg}},
\)
we have
\begin{equation}
\|
(\mathcal{H}^{T}\mathcal{H}
+
\lambda_{\mathrm{reg}}I)^{-1}
\|_{2}
\le
\frac{1}{\lambda_{\mathrm{reg}}}.
\end{equation}
Moreover,
\(
\|
\mathcal{H}^{T}
\|_{2}
=
\|
\mathcal{H}
\|_{2}.
\)
Hence,
\begin{equation}
\|
{W}_{u}
\|_{2}
\le
\frac{\|
\mathcal{H}
\|_{2}}
{\lambda_{\mathrm{reg}}}
\,
\|
Z
\|_{2},
\end{equation}
which proves the desired result. \end{proof}

\begin{theorem}[\textbf{Residual Error Bound}]
Let \({W}_{u}^{*}
=
\underset{{W}_{u}}{\arg\min}
\mathcal{J}({W}_{u})\),
where
\begin{equation}
\mathcal{J}({W}_{u})
=
\left\|
\mathcal{H}{W}_{u}
-
Z
\right\|_{2}^{2}
+
\lambda_{\mathrm{reg}}
\|
{W}_{u}
\|_{2}^{2},
\end{equation}
with $\lambda_{\mathrm{reg}}\ge0$.
Then, for every admissible output-weight vector
${W}_{u}$,
the optimal PI-BLS solution satisfies
\begin{equation}
\mathcal{J}({W}_{u}^{*})
\le
\mathcal{J}({W}_{u}).
\end{equation}
Furthermore, if there exists an exact output-weight vector
${W}_{u}^{e}$
such that $\mathcal{H}{W}_{u}^{e}
=
Z$,
then the approximation error satisfies
\begin{equation}
\|
\mathcal{H}
({W}_{u}^{*}
-
{W}_{u}^{e})
\|_{2}
\le
\sqrt{
\mathcal{J}({W}_{u}^{*})
}.
\end{equation}
\end{theorem}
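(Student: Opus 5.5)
The plan is to argue directly from the definition of $W_u^*$ and the structure of $\mathcal{J}$, without invoking the closed-form expression \eqref{eq:Wu}.

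\textbf{Existence of a minimizer.} Since $\lambda_{\mathrm{reg}}\ge 0$ is now allowed, I first make sure a minimizer $W_u^*$ exists.
\begin{itemize}
\item If $\lambda_{\mathrm{reg}}>0$, the objective $\mathcal{J}$ is strictly convex and coercive. Hence it has a unique minimizer, namely the one given by the normal equations derived in the proof of the stability theorem above.
\item If $\lambda_{\mathrm{reg}}=0$, the problem is an ordinary linear least-squares problem. It always has at least one minimizer, for instance $W_u^*=\mathcal{H}^{\dagger}Z$, which attains the distance from $Z$ to the range of $\mathcal{H}$.
\end{itemize}
In both cases $W_u^*$ is well defined; in the second case I take it to be any chosen minimizer. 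The first claim, $\mathcal{J}(W_u^*)\le\mathcal{J}(W_u)$ for every admissible $W_u$, is then simply the definition of a global minimizer.

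\textbf{The error bound.} For the second claim I use the hypothesis $\mathcal{H}W_u^e=Z$ to rewrite
\[
\mathcal{H}(W_u^*-W_u^e)=\mathcal{H}W_u^*-Z .
\]
Because $\lambda_{\mathrm{reg}}\|W_u^*\|_2^2\ge 0$, we have
\[
\|\mathcal{H}W_u^*-Z\|_2^2\le \mathcal{J}(W_u^*).
\]
Taking square roots gives the stated inequality.

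\textbf{Optional sharpening.} I would add a remark obtained by choosing $W_u=W_u^e$ in the first claim. Since $\mathcal{J}(W_u^e)=\lambda_{\mathrm{reg}}\|W_u^e\|_2^2$, this yields
\[
\|\mathcal{H}(W_u^*-W_u^e)\|_2\le\sqrt{\lambda_{\mathrm{reg}}}\,\|W_u^e\|_2 .
\]
This shows the error vanishes as $\lambda_{\mathrm{reg}}\to 0$.

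\textbf{Main obstacle.} The only delicate point is the existence (and possible non-uniqueness) of $W_u^*$ when $\lambda_{\mathrm{reg}}=0$, which I handle through the pseudoinverse as above. Everything else is elementary algebra.
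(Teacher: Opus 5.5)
Your proposal is correct and follows the paper's proof essentially verbatim. The first claim is the definition of a minimizer, and the second follows from rewriting $\mathcal{H}(W_u^*-W_u^e)=\mathcal{H}W_u^*-Z$ and discarding the nonnegative term $\lambda_{\mathrm{reg}}\|W_u^*\|_2^2$. Your existence discussion for $\lambda_{\mathrm{reg}}\ge 0$, which the paper takes for granted, and your sharpening $\|\mathcal{H}(W_u^*-W_u^e)\|_2\le\sqrt{\lambda_{\mathrm{reg}}}\,\|W_u^e\|_2$, obtained by testing the first claim with $W_u=W_u^e$, are both valid additions that the paper does not include.
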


\begin{proof}
Since
${W}_{u}^{*}$
is the minimizer of the objective function,
it follows immediately that
\[
\mathcal{J}({W}_{u}^{*})
\le
\mathcal{J}({W}_{u})
\]
for every admissible
${W}_{u}$. Suppose there exists an exact solution
${W}_{u}^{e}$
satisfying $\mathcal{H}{W}_{u}^{e}
=
Z$.
Then
$\mathcal{H}
({W}_{u}^{*}
-
{W}_{u}^{e})
=
\mathcal{H}{W}_{u}^{*}
-
Z.$
Therefore,
\begin{align}
\|
\mathcal{H}
({W}_{u}^{*}
-
{W}_{u}^{e})
\|_{2}^{2}
~=~
\|
\mathcal{H}{W}_{u}^{*}
-
Z
\|_{2}^{2} ~\le~
\mathcal{J}({W}_{u}^{*}),
\end{align}
since the regularization term
$\lambda_{\mathrm{reg}}
\|
{W}_{u}^{*}
\|_{2}^{2}$
is non-negative. Taking square roots on both sides completes the proof. \end{proof}


\begin{theorem}[\textbf{Sensitivity of the PI-BLS Solution}]
Assume that the output-weight vector is computed by solving
\begin{equation}
{W}_{u}
=
(\mathcal{H}^{T}\mathcal{H}
+
\lambda_{\mathrm{reg}}I)^{-1}
\mathcal{H}^{T}Z,
\end{equation}
where
$\lambda_{\mathrm{reg}}>0$. Let the physics-informed target vector be perturbed as $\widetilde{Z}
=
Z
+
\Delta Z$, and let
$\widetilde{W}_{u}$ denote the corresponding PI-BLS solution. Then the perturbation in the learned output weights satisfies
\begin{equation}
\|
\widetilde{W}_{u}
-
{W}_{u}
\|_{2}
\le
\frac{\|
\mathcal{H}
\|_{2}}
{\lambda_{\mathrm{reg}}}
\,
\|
\Delta Z
\|_{2}.
\end{equation}
Hence, sufficiently small perturbations in the physics-informed target vector produce proportionally small variations in the learned PI-BLS solution. 
\end{theorem}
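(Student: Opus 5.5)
The plan is to exploit the linearity of the closed-form map $Z \mapsto W_u$ and then reuse, essentially verbatim, the norm estimate from the proof of the stability theorem. Write $A := \mathcal{H}^{T}\mathcal{H} + \lambda_{\mathrm{reg}} I$. Since $\mathcal{H}^{T}\mathcal{H}$ is symmetric positive semidefinite and $\lambda_{\mathrm{reg}}>0$, the matrix $A$ is symmetric positive definite, and therefore invertible. So both
\[
W_u = A^{-1}\mathcal{H}^{T} Z
\quad\text{and}\quad
\widetilde{W}_u = A^{-1}\mathcal{H}^{T}\widetilde{Z}
\]
are well defined. The key point is that $\mathcal{H}$ and $\lambda_{\mathrm{reg}}$, and hence $A$, are unchanged by the perturbation; only the target vector moves.

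Subtracting the two expressions gives
\[
\widetilde{W}_u - W_u = A^{-1}\mathcal{H}^{T}(\widetilde{Z} - Z) = A^{-1}\mathcal{H}^{T}\Delta Z .
\]
This is exactly the expression bounded in the stability theorem, with $\Delta Z$ in place of $Z$. Applying submultiplicativity of the spectral norm, I would then bound it by
\[
\|\widetilde{W}_u - W_u\|_2 \le \|A^{-1}\|_2 \, \|\mathcal{H}^{T}\|_2 \, \|\Delta Z\|_2 .
\]

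To finish, I will invoke the same two facts used in the stability proof. First, every eigenvalue of $A$ is at least $\lambda_{\mathrm{reg}}$, so $\|A^{-1}\|_2 = 1/\lambda_{\min}(A) \le 1/\lambda_{\mathrm{reg}}$. Second, $\|\mathcal{H}^{T}\|_2 = \|\mathcal{H}\|_2$. Combining these yields the claimed bound. Alternatively, one can simply apply the stability theorem to the regularized problem with target $\Delta Z$: its minimizer is precisely $A^{-1}\mathcal{H}^{T}\Delta Z = \widetilde{W}_u - W_u$, so the stated inequality follows at once.

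No real obstacle is expected. The only point requiring care is to note explicitly that the perturbation affects $Z$ alone, so that the operator $A^{-1}\mathcal{H}^{T}$ is common to both solutions and linearity applies. If desired, one can remark that the sharper constant $\max_i \sigma_i/(\sigma_i^2+\lambda_{\mathrm{reg}}) \le 1/(2\sqrt{\lambda_{\mathrm{reg}}})$, with $\sigma_i$ the singular values of $\mathcal{H}$, also holds via the SVD. The stated bound is all that is needed, however.
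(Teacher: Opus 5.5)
Your proposal is correct and follows the paper's proof essentially step for step. You subtract the two closed-form solutions to get $(\mathcal{H}^{T}\mathcal{H}+\lambda_{\mathrm{reg}}I)^{-1}\mathcal{H}^{T}\Delta Z$, apply submultiplicativity, and use $\|(\mathcal{H}^{T}\mathcal{H}+\lambda_{\mathrm{reg}}I)^{-1}\|_2\le 1/\lambda_{\mathrm{reg}}$ together with $\|\mathcal{H}^{T}\|_2=\|\mathcal{H}\|_2$; your side remarks (explicit invertibility of $A$, and the sharper SVD constant $1/(2\sqrt{\lambda_{\mathrm{reg}}})$) are also correct but not needed for the stated bound.
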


\begin{proof} The perturbed solution is given by
\begin{equation}
\widetilde{W}_{u}
=
(\mathcal{H}^{T}\mathcal{H}
+
\lambda_{\mathrm{reg}}I)^{-1}
\mathcal{H}^{T}
\widetilde{Z}.
\end{equation}
Subtracting the original solution yields
\begin{align}
\widetilde{W}_{u}
-
{W}_{u}
=
(\mathcal{H}^{T}\mathcal{H}
+
\lambda_{\mathrm{reg}}I)^{-1}
\mathcal{H}^{T}
(
\widetilde{Z}
-
Z
)=
(\mathcal{H}^{T}\mathcal{H}
+
\lambda_{\mathrm{reg}}I)^{-1}
\mathcal{H}^{T}
\Delta Z.
\end{align}
Taking the Euclidean norm gives
\begin{align}
\|
\widetilde{W}_{u}
-
{W}_{u}
\|_{2}
&\le
\|
(\mathcal{H}^{T}\mathcal{H}
+
\lambda_{\mathrm{reg}}I)^{-1}
\|_{2}
\,
\|
\mathcal{H}^{T}
\|_{2}
\,
\|
\Delta Z
\|_{2}.
\end{align}
Since
\[
\|
(\mathcal{H}^{T}\mathcal{H}
+
\lambda_{\mathrm{reg}}I)^{-1}
\|_{2}
\le
\frac{1}{\lambda_{\mathrm{reg}}}
~~\text{and}~~
\|
\mathcal{H}^{T}
\|_{2}
=
\|
\mathcal{H}
\|_{2},
\]
it follows that
\[
\|
\widetilde{W}_{u}
-
{W}_{u}
\|_{2}
\le
\frac{\|
\mathcal{H}
\|_{2}}
{\lambda_{\mathrm{reg}}}
\,
\|
\Delta Z
\|_{2},
\]
which completes the proof. \end{proof}


\begin{theorem}[\textbf{Physics Consistency of PI-BLS}]
Let $\widetilde{\bm u}(X;{W}_u)$ denote the PI-BLS approximation of the solution $\bm{u}(X)$, where ${W}_u$ represents the trainable output weights. Suppose that ${W}_u^{*}$ minimizes the physics-informed objective function
\begin{equation}
\mathcal{J}
=
\mathcal{J}_{u}
+
\lambda_b\mathcal{J}_{b}
+
\lambda_I\mathcal{J}_{I}
+
\lambda_{\mathrm{reg}}
\|{W}_u\|_2^2,
\end{equation}
with $\lambda_b,\lambda_I,\lambda_{\mathrm{reg}}\ge0$. If the regularization parameter satisfies
$\lambda_{\mathrm{reg}}=0$
and the minimum value of the objective function is zero, i.e., $\mathcal{J}({W}_u^{*})=0$, then the PI-BLS approximation exactly satisfies the governing PDE together with all prescribed boundary and initial conditions at every collocation point.
\end{theorem}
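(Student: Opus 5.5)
The argument is a nonnegativity argument: a sum of nonnegative terms vanishes only if every term vanishes. With $\lambda_{\mathrm{reg}}=0$, the objective evaluated at ${W}_u^{*}$ becomes
\begin{equation*}
0=\mathcal{J}({W}_u^{*})=\mathcal{J}_{u}({W}_u^{*})+\lambda_b\mathcal{J}_{b}({W}_u^{*})+\lambda_I\mathcal{J}_{I}({W}_u^{*}),
\end{equation*}
where $\mathcal{J}_u,\mathcal{J}_b,\mathcal{J}_I$ are the sums of squared pointwise residuals from \eqref{erro1_PDE}--\eqref{error1_IC}, with $\widetilde{\bm u}=\mathcal{G}W_u^{*}$.

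\textbf{Step 1: each weighted term vanishes.} Each of $\mathcal{J}_u,\mathcal{J}_b,\mathcal{J}_I$ is a finite sum of squares, hence nonnegative, and the weights $\lambda_b,\lambda_I$ are nonnegative. Therefore $\mathcal{J}_u({W}_u^{*})=0$, $\lambda_b\mathcal{J}_b({W}_u^{*})=0$ and $\lambda_I\mathcal{J}_I({W}_u^{*})=0$.

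\textbf{Step 2: every pointwise residual vanishes.} Since a finite sum of squares is zero only when each summand is zero, $\mathcal{J}_u({W}_u^{*})=0$ gives
\begin{equation*}
\frac{\partial \widetilde{\bm u}}{\partial t}(\bm{x}_{i,u},t_{i,u})+\mathcal{N}_{\bm x}[\widetilde{\bm u}](\bm{x}_{i,u},t_{i,u})=R(\bm{x}_{i,u},t_{i,u})
\end{equation*}
at every interior collocation point. The same reasoning applies to the boundary and initial residuals.

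\textbf{Step 3: the linear-algebra form (optional).} In the notation of Algorithm~\ref{alg:1}, the hypotheses give $\|\mathcal{H}{W}_u^{*}-Z\|_2^2=0$, so $\mathcal{H}{W}_u^{*}=Z$. Reading this off block by block yields $H_{\mathrm{col}}{W}_u^{*}=R(X_{\mathrm{col}})$, $\sqrt{\lambda_b}H_b{W}_u^{*}=\sqrt{\lambda_b}B(X_b)$ and $\sqrt{\lambda_I}H_I{W}_u^{*}=\sqrt{\lambda_I}\mathcal{I}(X_I)$. Here linearity of the operators is what identifies $H_{\mathrm{col}}W_u^{*}$ with $\partial_t\widetilde{\bm u}+\mathcal{N}_{\bm x}[\widetilde{\bm u}]$ evaluated at the collocation points. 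This is also the situation where the second part of the Residual Error Bound theorem applies with zero right-hand side.

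\textbf{Main obstacle: the weights must be strictly positive.} The statement permits $\lambda_b,\lambda_I\ge0$, but $\lambda_b\mathcal{J}_b=0$ implies $\mathcal{J}_b=0$ only when $\lambda_b>0$, and likewise for $\lambda_I$. If $\lambda_b=0$ or $\lambda_I=0$, the corresponding conditions are simply not enforced and the claim fails for them. So I would divide by $\lambda_b$ and $\lambda_I$ under the assumption $\lambda_b,\lambda_I>0$, which is the setting of Algorithm~\ref{alg:1}. I would state this explicitly in the proof, noting that without it only the PDE residual (weight $1$) is guaranteed to vanish. With this assumption the conclusion holds at every interior, boundary and initial collocation point.
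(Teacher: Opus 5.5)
Your proof is correct and follows the same route as the paper's. With $\lambda_{\mathrm{reg}}=0$ the objective is a nonnegative combination of sums of squared residuals, so $\mathcal{J}(W_u^{*})=0$ forces every pointwise residual to vanish. Your caveat about the weights is a genuine correction to the paper: its proof passes directly from $\lambda_b\mathcal{J}_b=0$ and $\lambda_I\mathcal{J}_I=0$ to $\mathcal{J}_b=\mathcal{J}_I=0$, which is valid only when $\lambda_b,\lambda_I>0$, so the theorem as stated with $\lambda_b,\lambda_I\ge 0$ needs exactly the strict-positivity hypothesis you add.
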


\begin{proof} The objective function is given by
\[
\mathcal{J}
=
\mathcal{J}_{u}
+
\lambda_b\mathcal{J}_{b}
+
\lambda_I\mathcal{J}_{I}
+
\lambda_{\mathrm{reg}}
\|{W}_u\|_2^2,
\]
where each loss term is non-negative because it is defined as the mean squared residual of the corresponding physical constraint. Hence,
$\mathcal{J}_{u}\ge0,~
\mathcal{J}_{b}\ge0,~\text{and}~
\mathcal{J}_{I}\ge0$.

Assume that $\lambda_{\mathrm{reg}}=0$ and that the global minimum satisfies $\mathcal{J}({W}_u^{*})=0$. Since the objective is a sum of non-negative terms, each term must vanish individually, namely,
$\mathcal{J}_{u}=0,~
\mathcal{J}_{b}=0,~\text{and}~
\mathcal{J}_{I}=0$. The condition $\mathcal{J}_{u}=0$ implies that the residual of the governing differential equation is zero at every interior collocation point, i.e.,
$\frac{\partial \widetilde{\bm u}(X; {W}_u^{*})}{\partial t} + \mathcal{N}_{\bm x}
\bigl(
\widetilde{\bm u}(X;{W}_u^{*})
\bigr)
=
R(X).$
Similarly, $\mathcal{J}_{b}=0$ implies that all prescribed boundary conditions are satisfied exactly on the boundary collocation points, while $\mathcal{J}_{I}=0$ ensures that the initial conditions are satisfied exactly at the temporal collocation points whenever the problem is time-dependent.

Therefore, the PI-BLS approximation satisfies the governing PDE together with all prescribed boundary and initial conditions at every collocation point. Hence, the obtained solution is fully consistent with the imposed physical constraints. \end{proof}


\section{Experiments and Discussions}\label{SEC_experiment}
To evaluate the performance of the proposed PI-BLS, we consider several linear PDEs. First, we provide details of the implementation used in the empirical study, followed by discussions of the test cases.

\subsection{Implementation Details}
All experiments are conducted under a unified computational environment to ensure a fair comparison among the competing methods. The implementation details, test problems, baseline methods, hyperparameter configuration, and metrics are discussed below.
\subsubsection{Setup}\label{sec:setup}
All experiments were implemented in \texttt{Python 3.14.3} using the \texttt{PyTorch 2.11.0} framework with CUDA~12.6 and cuDNN~9.10 acceleration. The experiments were conducted on a workstation equipped with an Intel\textsuperscript{\textregistered} Xeon\textsuperscript{\textregistered} Platinum 8260 CPU (24 cores, 48 logical processors, 2.30~GHz), 256~GB RAM, and an NVIDIA RTX A4500 GPU running the Windows~10 operating system. For all learning models, hyperparameters were selected using a 5-fold cross-validation strategy combined with grid search. Specifically, the training data were partitioned into five folds, and for each candidate hyperparameter configuration, the validation RMSE was computed independently on each fold. The average validation RMSE over the five folds was used as the model selection criterion, and the hyperparameter configuration yielding the lowest average validation RMSE was selected. The final model was subsequently retrained using the selected hyperparameters, and its performance was evaluated on an independent test set using RMSE, MSE, MAE and their corresponding relative error metrics. For a fair comparison, all competing methods are trained and evaluated using identical training and testing points for each benchmark problem.

\subsubsection{Test Problems}

The test cases of PDEs are taken from \cite{berg2018unified, gradientPINN}. In the first example, we provide a detailed construction of the matrices and the associated loss function to help the reader gain a better understanding of the method. Let $\bm{u}(X)$ and $\widetilde{\bm{u}}(X; W_u^*)$ be the exact solution of the PDE and PI-BLS solution, respectively. 

For the one-dimensional steady-state advection equation, the number of interior collocation points was varied as $N_{\mathrm{train}} \in \{5,10,15,20,25,30\}$, while the test set consisted of $N_{\mathrm{test}}=100$ evaluation points. The interior collocation points were generated by randomly sampling points from the computational domain $(0,1)$.

For the two-dimensional Poisson equation, the number of interior collocation points was varied as $N_{\mathrm{train}} \in \{50,60,70,80,90,100\}$, and the solution accuracy was evaluated using $N_{\mathrm{test}}=10,000$ test points. The interior collocation points were randomly sampled within the irregular computational domain using a polar-coordinate parameterization, thereby ensuring an approximately uniform spatial distribution while accurately preserving the domain geometry.

For the diffusion--reaction equation, the number of interior collocation points was also varied as $N_{\mathrm{train}} \in \{50,60,70,80,90,100\}$, with $N_{\mathrm{test}}=10,000$ evaluation points. The interior collocation points were generated by uniformly sampling the spatial variable over $[-\pi,\pi]$ and the temporal variable over $[0,1]$, producing space--time samples at which the governing PDE residual is enforced.

\subsubsection{Baseline Methods}
The proposed PI-BLS framework is evaluated through comprehensive comparisons with the conventional PINN \cite{PINN2019} with two hidden layers. We also take two representative randomized physics-informed learning models, namely PI-ELM \cite{PIELM2020} and B-PI-ELM \cite{PINN2023bayesian}, on all three benchmark problems. All competing methods are trained and evaluated under identical experimental settings to ensure a fair and unbiased comparison.

\subsubsection{Hyperparameter Configuration}

For each benchmark problem, the hyperparameters of all competing methods were selected using the 5-fold cross-validation and grid-search procedure described previously. Since different PDEs impose different physical constraints, the search spaces were adjusted accordingly. In particular, the initial-condition penalty parameter ($\lambda_I$) was introduced only for the time-dependent diffusion--reaction equation, whereas the regularization parameter ($\lambda_{\mathrm{reg}}$) was considered only when applicable.  The hyperparameter search spaces for each benchmark problem are summarized in Tables~\ref{tab:tc1_hyper}, \ref{tab:poisson_hyper}, and \ref{tab:dr_hyper}. 

\begin{table}[!t]
\centering
\caption{Hyperparameter search space for the one-dimensional steady-state advection equation problem.}
\label{tab:tc1_hyper}
\renewcommand{\arraystretch}{1.2}
\resizebox{12cm}{!}{
\begin{tabular}{ll}
\hline
\textbf{Method} & \textbf{Search Space} \\
\hline
PI-BLS &
$N_c\!\in\!\{5,13,21\}$,
$p\!\in\!\{10,30,50\}$,
$q\!\in\!\{25,65,105\}$,
$\lambda_b\!\in\!\{10^{-4},10^{-2},1\}$,
$\sigma\!\in\!\{\text{ReLU},\text{Sigmoid},\text{Tanh}\}$ \\

PINN &
$N_h\!\in\!\{3,23,\ldots,203\}$, 
$L_h=2$,
$\lambda_b\!\in\!\{10^{-4},10^{-2},1\}$,
$\sigma\!\in\!\{\text{ReLU},\text{Sigmoid},\text{Tanh}\}$ \\

PI-ELM &
$N_h\!\in\!\{3,23,\ldots,203\}$,
$\lambda_b\!\in\!\{10^{-4},10^{-2},1\}$,
$\sigma\!\in\!\{\text{ReLU},\text{Sigmoid},\text{Tanh}\}$ \\

B-PI-ELM &
$N_h\!\in\!\{3,23,\ldots,203\}$,
$\lambda_b\!\in\!\{10^{-4},10^{-2},1\}$,
$\sigma\!\in\!\{\text{ReLU},\text{Sigmoid},\text{Tanh}\}$ \\
\hline
\end{tabular}}
\end{table}

\begin{table}[!t]
\centering
\caption{Hyperparameter search space for the two-dimensional Poisson equation.}
\label{tab:poisson_hyper}
\renewcommand{\arraystretch}{1.2}
\resizebox{12cm}{!}{
\begin{tabular}{ll}
\hline
\textbf{Method} & \textbf{Search Space} \\
\hline
PI-BLS &
$N_c\!\in\!\{5,13,21\}$,
$p\!\in\!\{10,30,50\}$,
$q\!\in\!\{25,65,105\}$,
$\lambda_b\!\in\!\{10^{-4},10^{-2},1\}$,
$\lambda_{\rm reg}\!\in\!\{10^{-4},10^{-2},1\}$,
$\sigma\!\in\!\{\text{ReLU},\text{Sigmoid},\text{Tanh}\}$ \\

PINN &
$N_h\!\in\!\{3,23,\ldots,203\}$,
$L_h=2$,
$\lambda_b\!\in\!\{10^{-4},10^{-2},1\}$,
$\sigma\!\in\!\{\text{ReLU},\text{Sigmoid},\text{Tanh}\}$ \\

PI-ELM &
$N_h\!\in\!\{3,23,\ldots,203\}$,
$\lambda_b\!\in\!\{10^{-4},10^{-2},1\}$,
$\sigma\!\in\!\{\text{ReLU},\text{Sigmoid},\text{Tanh}\}$ \\

B-PI-ELM &
$N_h\!\in\!\{3,23,\ldots,203\}$,
$\lambda_b\!\in\!\{10^{-4},10^{-2},1\}$,
$\sigma\!\in\!\{\text{ReLU},\text{Sigmoid},\text{Tanh}\}$ \\
\hline
\end{tabular}}
\end{table}

\begin{table}[!t]
\centering
\caption{Hyperparameter search space for the diffusion--reaction equation.}
\label{tab:dr_hyper}
\renewcommand{\arraystretch}{1.2}
\resizebox{12cm}{!}{
\begin{tabular}{ll}
\hline
\textbf{Method} & \textbf{Search Space} \\
\hline
PI-BLS &
$N_c\!\in\!\{5,13,21\}$,
$p\!\in\!\{10,30,50\}$,
$q\!\in\!\{25,65,105\}$,
$\lambda_b\!\in\!\{10^{-4},10^{-2},1\}$,
$\lambda_I\!\in\!\{10^{-4},10^{-2},1\}$,
$\sigma\!\in\!\{\text{ReLU},\text{Sigmoid},\text{Tanh}\}$ \\

PINN &
$N_h\!\in\!\{3,23,\ldots,203\}$,
$L_h=2$,
$\lambda_b\!\in\!\{10^{-4},10^{-2},1\}$,
$\lambda_I\!\in\!\{10^{-4},10^{-2},1\}$,
$\sigma\!\in\!\{\text{ReLU},\text{Sigmoid},\text{Tanh}\}$ \\

PI-ELM &
$N_h\!\in\!\{3,23,\ldots,203\}$,
$\lambda_b\!\in\!\{10^{-4},10^{-2},1\}$,
$\lambda_I\!\in\!\{10^{-4},10^{-2},1\}$,
$\sigma\!\in\!\{\text{ReLU},\text{Sigmoid},\text{Tanh}\}$ \\

B-PI-ELM &
$N_h\!\in\!\{3,23,\ldots,203\}$,
$\lambda_b\!\in\!\{10^{-4},10^{-2},1\}$,
$\lambda_I\!\in\!\{10^{-4},10^{-2},1\}$,
$\sigma\!\in\!\{\text{ReLU},\text{Sigmoid},\text{Tanh}\}$ \\
\hline
\end{tabular}}
\end{table}

\subsubsection{Metrics} To assess the accuracy of the predicted solutions, we evaluate the methods using the metrics tabulated in Table \ref{tab:metrics}.

\begin{table*}[t]
\centering
\caption{Performance evaluation metrics. Here, $N$ is the total number of test samples, $\bm{u}(X)$ is the exact solution, $\widetilde{\bm{u}}(X)$ is the predicted solution, and $\bar{\bm{u}}=\frac{1}{N}\sum_{i=1}^{N}\bm{u}(X_i)$.}
\label{tab:metrics}
\small
\resizebox{12cm}{!}{
\begin{tabular}{p{4cm}p{6cm}}
\toprule
\textbf{Metric} & \textbf{Expression} \\
\midrule

MSE &
$\displaystyle
\frac{1}{N}\sum_{i=1}^{N}\left(\widetilde{\bm{u}}(X_i)-\bm{u}(X_i)\right)^2$
\\

RMSE &
$\displaystyle
\sqrt{\frac{1}{N}\sum_{i=1}^{N}\left(\widetilde{\bm{u}}(X_i)-\bm{u}(X_i)\right)^2}$
\\

MAE &
$\displaystyle
\frac{1}{N}\sum_{i=1}^{N}\left|\widetilde{\bm{u}}(X_i)-\bm{u}(X_i)\right|$
\\

Relative MSE &
$\displaystyle
\frac{\sum_{i=1}^{N}\left(\widetilde{\bm{u}}-\bm{u}\right)^2}
{\sum_{i=1}^{N}\bm{u}^2}$
\\

Relative RMSE ($L^2$ Error) &
$\displaystyle
\frac{\sqrt{\sum_{i=1}^{N}\left(\widetilde{\bm{u}}-\bm{u}\right)^2}}
{\sqrt{\sum_{i=1}^{N}\bm{u}^2}}$
\\

Relative MAE &
$\displaystyle
\frac{\sum_{i=1}^{N}\left|\widetilde{\bm{u}}-\bm{u}\right|}
{\sum_{i=1}^{N}\left|\bm{u}\right|}$
\\

$L_\infty$ Error &
$\displaystyle
\max_{1\le i\le N}\left|\widetilde{\bm{u}}(X_i)-\bm{u}(X_i)\right|$
\\

Relative $L_\infty$ Error &
$\displaystyle
\frac{\max_{1\le i\le N}\left|\widetilde{\bm{u}}-\bm{u}\right|}
{\max_{1\le i\le N}\left|\bm{u}\right|}$
\\

$R^2$ Score &
$\displaystyle
1-\frac{\sum_{i=1}^{N}\left(\widetilde{\bm{u}}-\bm{u}\right)^2}
{\sum_{i=1}^{N}\left(\bm{u}-\bar{\bm{u}}\right)^2}$
\\

\bottomrule
\end{tabular}}
\end{table*}

  \subsection{Discussion: PI-BLS for Steady Advection Equation}
The one-dimensional advection equation is expressed as follows:
\begin{align}
    \begin{array}{rc}
        & \bm{u}_x=R(x), ~0<x\leq 1. 
    \end{array}
\end{align}
  The expressions for 
$R$ and the Dirichlet boundary conditions are derived based on the following exact solution:
  \begin{align}
      &{\bm u}(x)=\sin(2\pi x)\cos(4\pi x)+1.
  \end{align}
  The differential operator $\mathcal{N}_{x}[\,\cdot \,]$ for these problems is $\frac{\partial}{\partial x}.$
  The boundary operator is the identity for all the problems. 

  The expressions for the error function due to PDE residual  
 is given by:
 \begin{align}\label{erro_PDE_TC1}
&\mathcal{J}_{u}= \widetilde{\bm{u}}_x(x_i)-{R}(x_i)~~\text{on}~~\{x_{i, u}\}_{i=1}^{N_c}.
  \end{align}
  The error function for the boundary condition is given by 
  \begin{align*}
      \label{error_BC2}
&\mathcal{J}_{b}=\widetilde{\bm{u}}(x)-B(x)~~ \text{on}~~\{x_{i,b}\}_{i=1}^{N_b}.
  \end{align*}
 In this example, we directly calculated the derivatives of the approximate functions for the listed PDEs.  
The coefficient matrices are given by:
\[H_{\mathrm{col}} 
= \frac{\partial \mathcal{G}}{\partial x}(X_{\mathrm{col}})
= \bmatrix{\textbf{F}_{x}(X_{\text{col}}) & \textbf{E}_{x}(X_{\text{col}})} ~~\text{and}~~H_b=\bmatrix{E(X_b) & F(X_{b})}. \]
\begin{table}[t!]
\centering
\caption{Comparison of performance metrics for PI-BLS, PINN, PI-ELM, and B-PI-ELM methods for the advection equation with  30 collocation points.}
\label{Tab:advection}
\resizebox{12cm}{!}{
\begin{tabular}{lcccccccc}
\toprule
& \multicolumn{2}{c}{PI-BLS} & \multicolumn{2}{c}{PINN} & \multicolumn{2}{c}{PI-ELM} & \multicolumn{2}{c}{B-PI-ELM} \\
\cmidrule(lr){2-3} \cmidrule(lr){4-5} \cmidrule(lr){6-7} \cmidrule(lr){8-9}
Parameter & Train & Test & Train & Test & Train & Test & Train & Test \\
\midrule
MSE     & 4.5149e-07 & 4.9807e-07 & 2.2248e-05 & 2.1427e-05 & 3.5523e-01 & 4.2637e-01 &    1.1356e+00&1.1407e+0
  \\
RMSE    & 6.7193e-04 & 7.0573e-04 &  4.7167e-03 & 4.6290e-03 & 5.9601e-01 & 6.5297e-01 &    1.0656e+00& 
 1.0680e+00
 \\
MAE     & 5.5396e-04 & 5.7593e-04 & 4.5665e-03 &  4.4032e-03 & 4.3809e-01 & 5.1484e-01 &   9.5790e-01 & 9.3827e-01\\
 Relative MSE  & 3.9758e-07 & 4.3665e-07 & 1.9591e-05 & 1.8785e-05 & 3.1281e-01 & 3.7379e-01 &  1.0000e+00 & 1.0000e+00  \\
Relative RMSE & 6.3054e-04 & 6.6079e-04 &  4.4262e-03 & 4.3342e-03 & 5.5930e-01 & 6.1138e-01 &  1.0000e+00 & 1.0000e+00   \\
Relative MAE  & 5.7830e-04 & 6.1382e-04 & 4.7672e-03 & 4.6928e-03 & 4.5734e-01 & 5.4870e-01 &  1.0000e+00 & 1.0000e+00  \\
\bottomrule
\end{tabular}
}
\end{table}
Table~\ref{Tab:advection} presents a comparative performance analysis of the proposed PI-BLS against PINN, PI-ELM, and B-PI-ELM for solving the advection equation using $30$ training collocation points. As observed, PI-BLS consistently outperforms the competing methods across all evaluation metrics. It achieves the lowest training and testing values of MSE, RMSE, MAE, and their corresponding relative error measures, demonstrating its superior approximation accuracy and generalization capability. On the testing dataset, PI-BLS attains an MSE of $4.9807\times10^{-7}$, which is approximately $43$ times smaller than that of PINN and several orders of magnitude lower than those of PI-ELM and B-PI-ELM. Similar improvements are observed for RMSE, MAE, and the corresponding relative error metrics. Moreover, the relative MSE, relative RMSE, and relative MAE obtained by PI-BLS remain in the range of $10^{-7}$--$10^{-4}$, whereas PI-ELM and B-PI-ELM exhibit significantly larger normalized errors, with B-PI-ELM approaching unity. These results demonstrate the superior numerical accuracy, stability, and robustness of the proposed PI-BLS framework.
\begin{figure}
    \centering
    \begin{minipage}{0.32\textwidth}
        \centering
        \includegraphics[width=\linewidth]{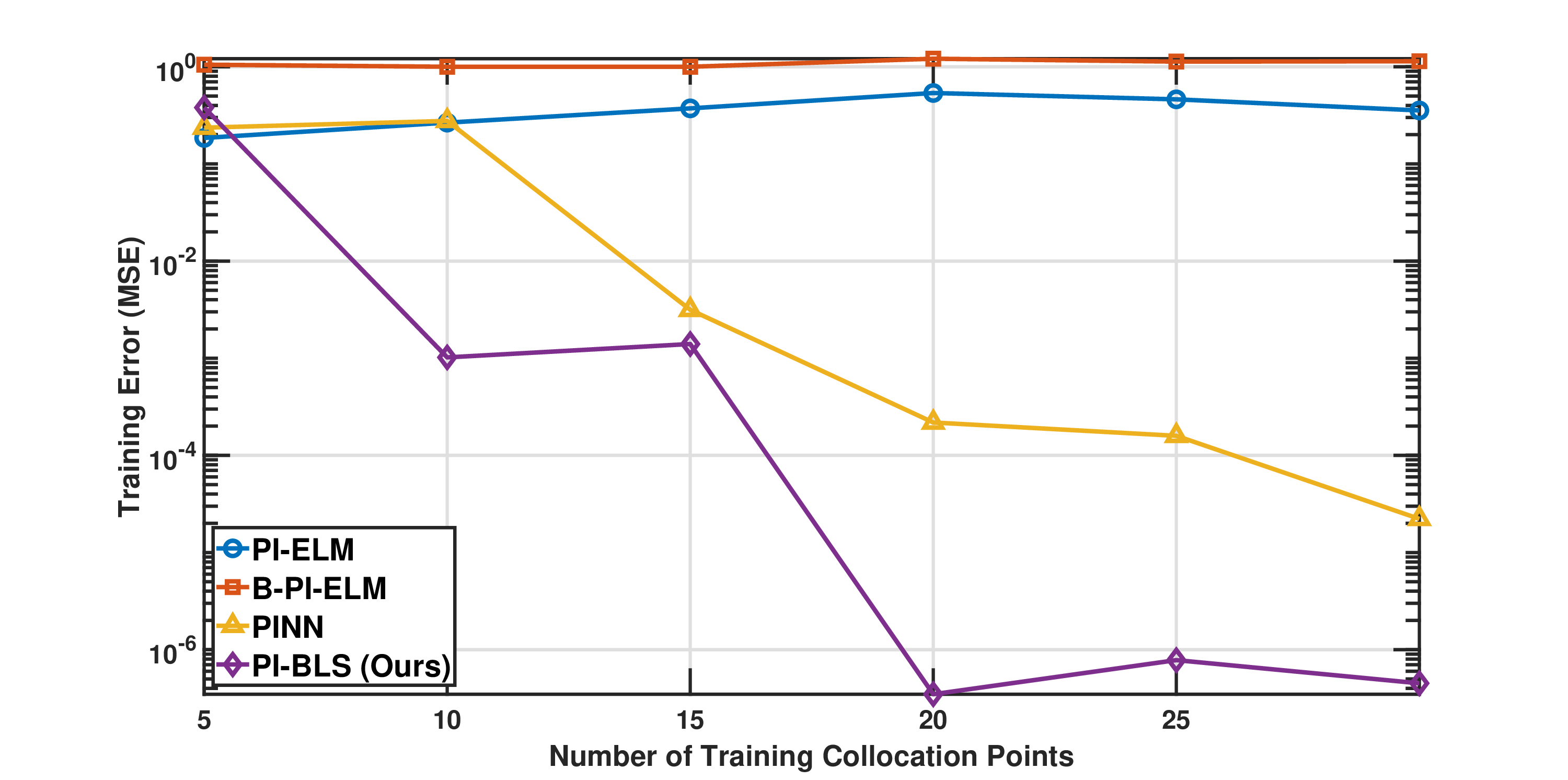}
        \caption*{MSE error in training}
    \end{minipage}
    \begin{minipage}{0.32\textwidth}
        \centering
        \includegraphics[width=\linewidth]{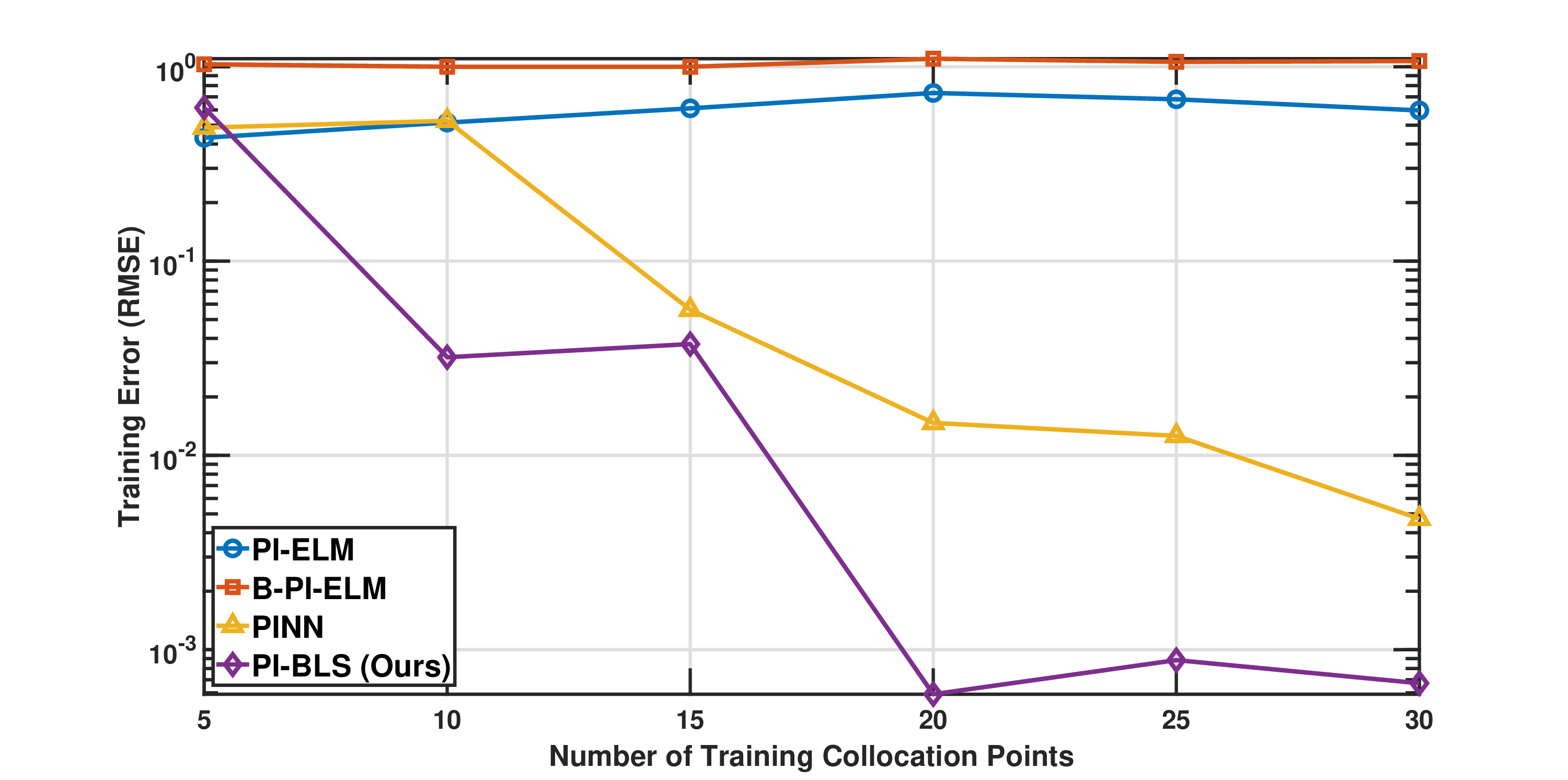}
        \caption*{RMSE error in training}
    \end{minipage}
    \begin{minipage}{0.32\textwidth}
        \centering
        \includegraphics[width=\linewidth]{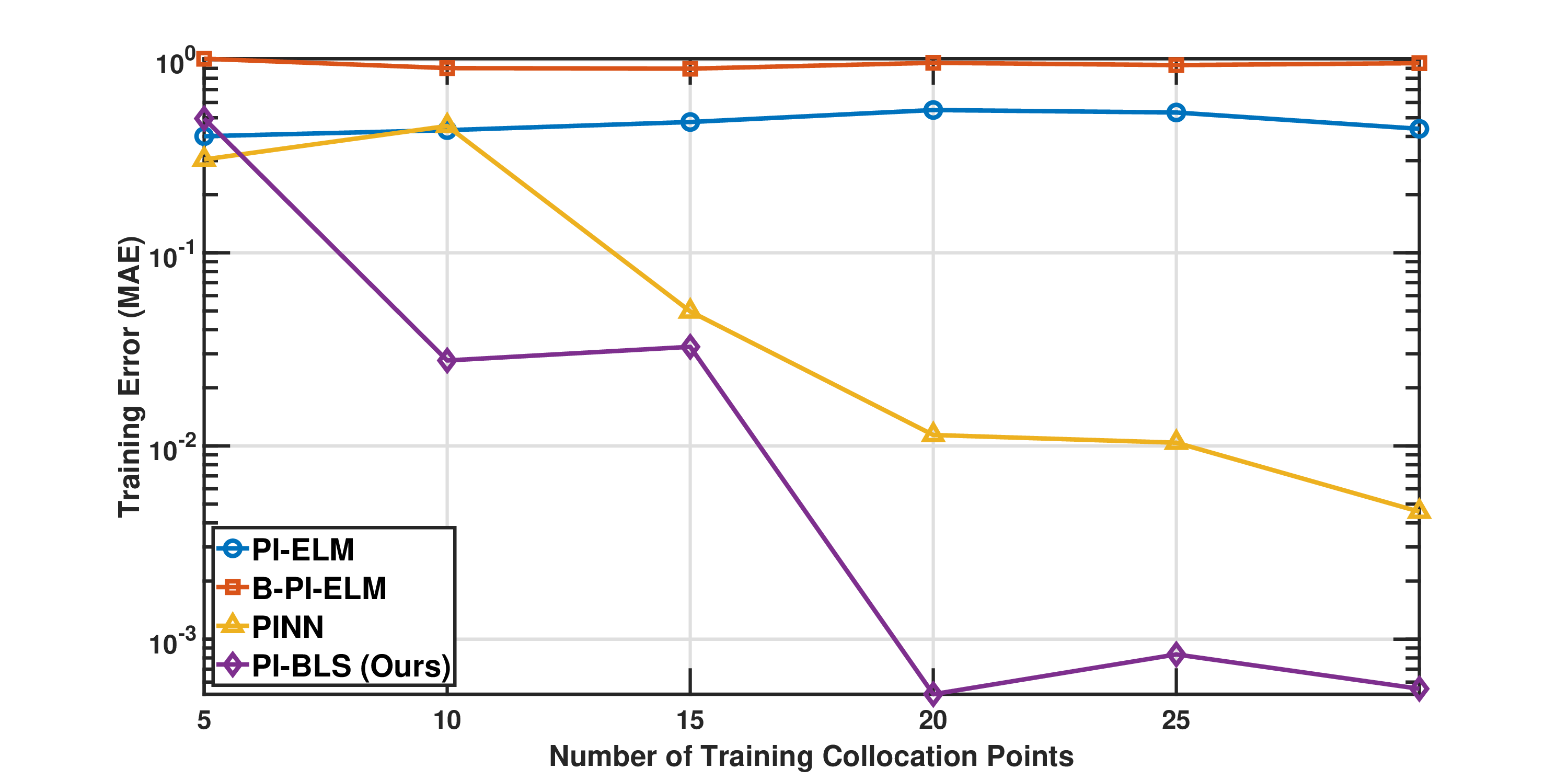}
        \caption*{MAE error in training}
    \end{minipage}

    \vspace{0.5cm} 

    \begin{minipage}{0.32\textwidth}
        \centering
        \includegraphics[width=\linewidth]{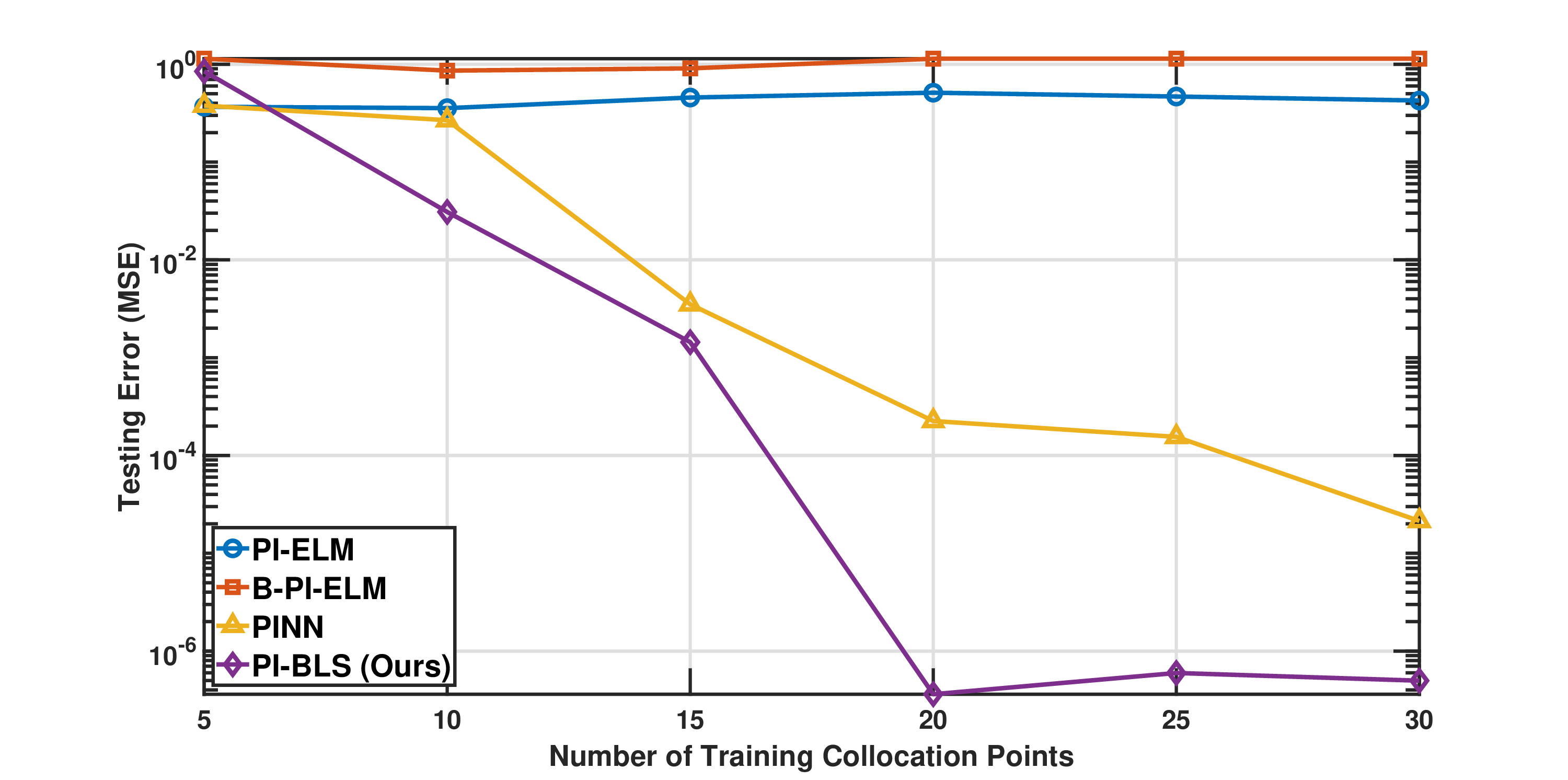}
        \caption*{MSE error in testing}
    \end{minipage}
    \begin{minipage}{0.32\textwidth}
        \centering
        \includegraphics[width=\linewidth]{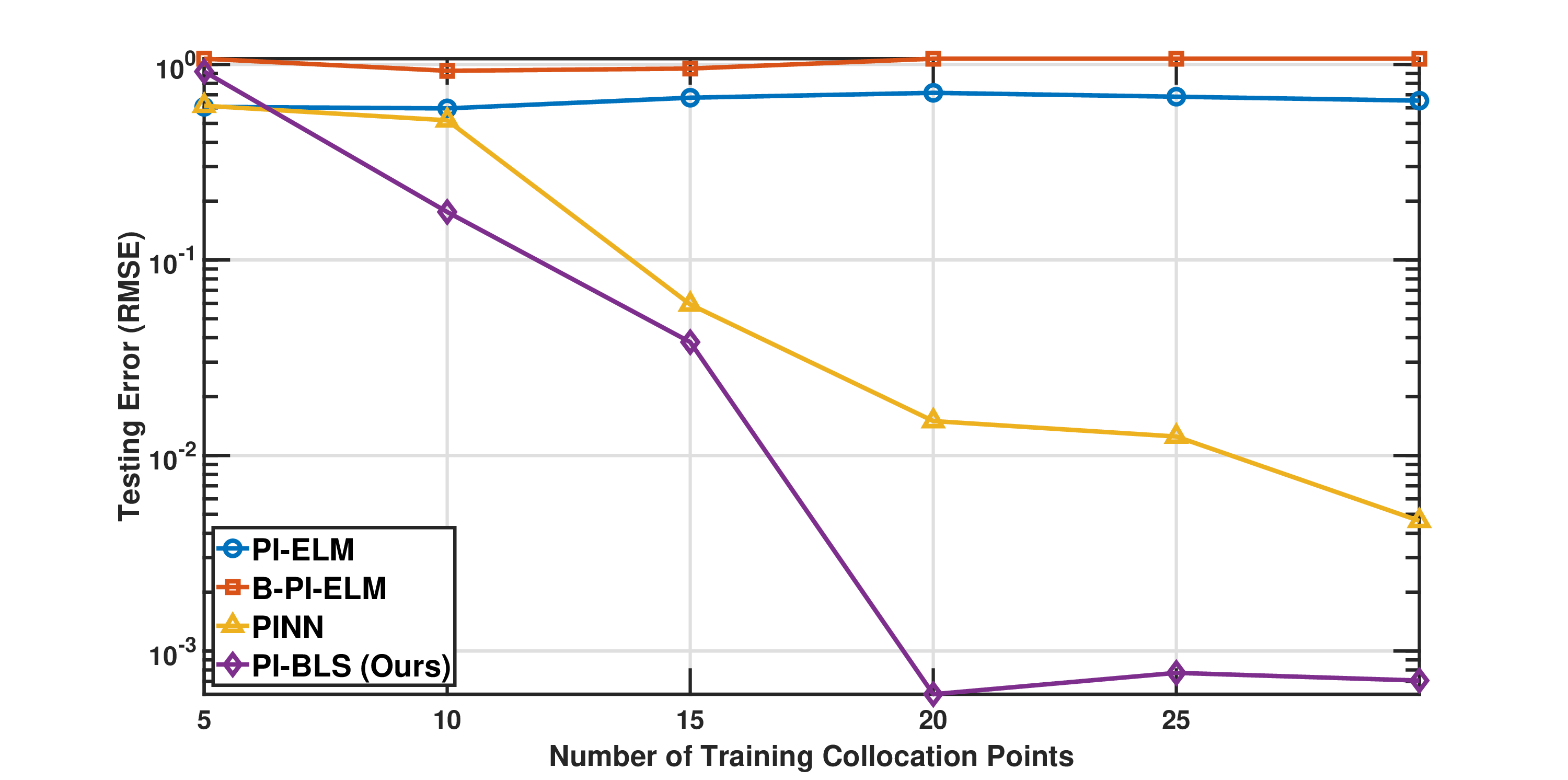}
        \caption*{RMSE error in testing}
    \end{minipage}
    \begin{minipage}{0.32\textwidth}
        \centering
        \includegraphics[width=\linewidth]{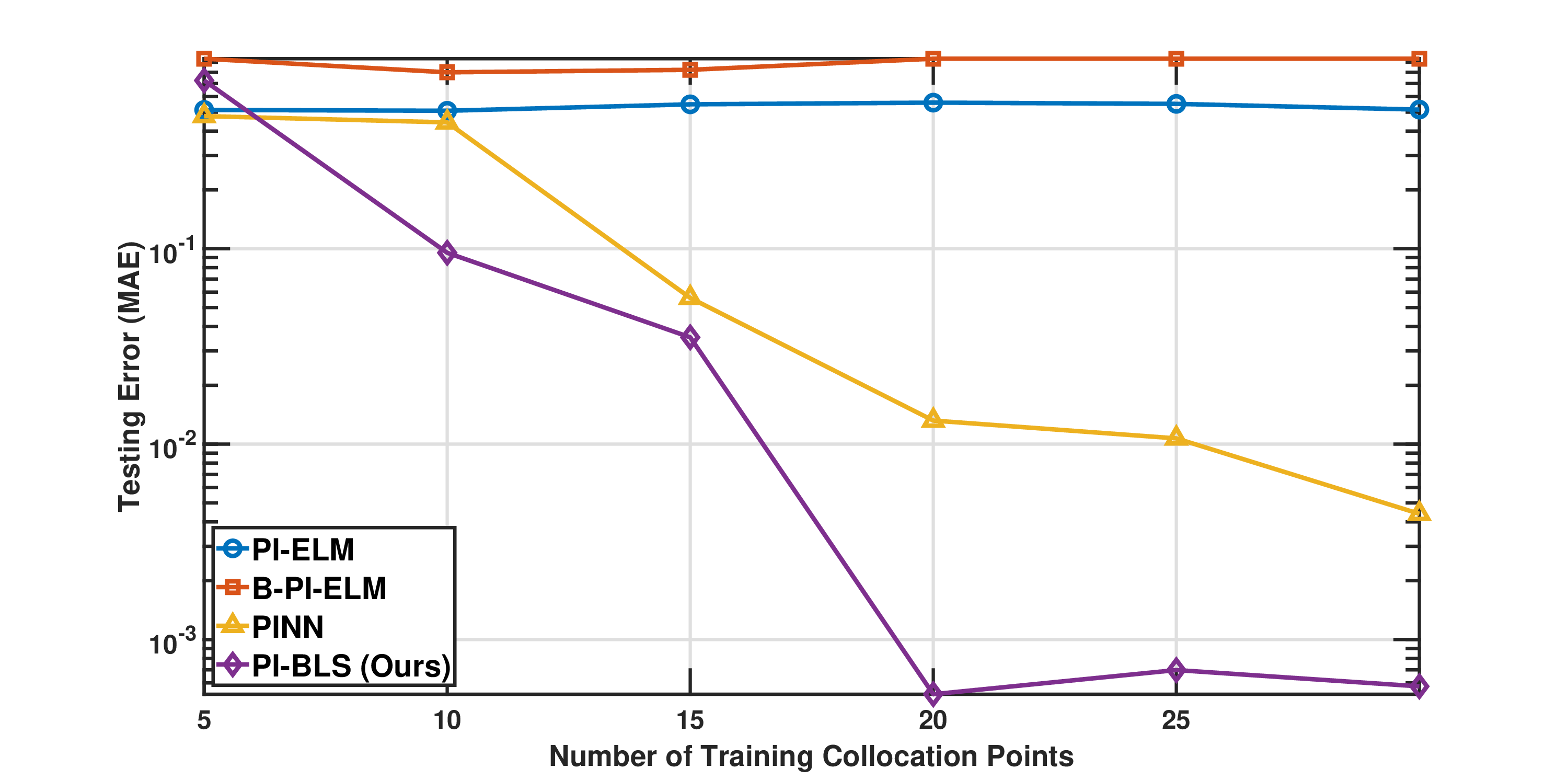}
        \caption*{MAE error in testing}
    \end{minipage}
    \caption{Comparison of various error metrics for the proposed PI-BLS, PINN, PI-ELM, and B-PI-ELM methods across different numbers of training points for the advection equation.}
    \label{fig:advection_error_metrics}
\end{figure}

\begin{figure}
    \centering
    \begin{minipage}{0.75\textwidth}
        \centering
        \includegraphics[width=\linewidth]{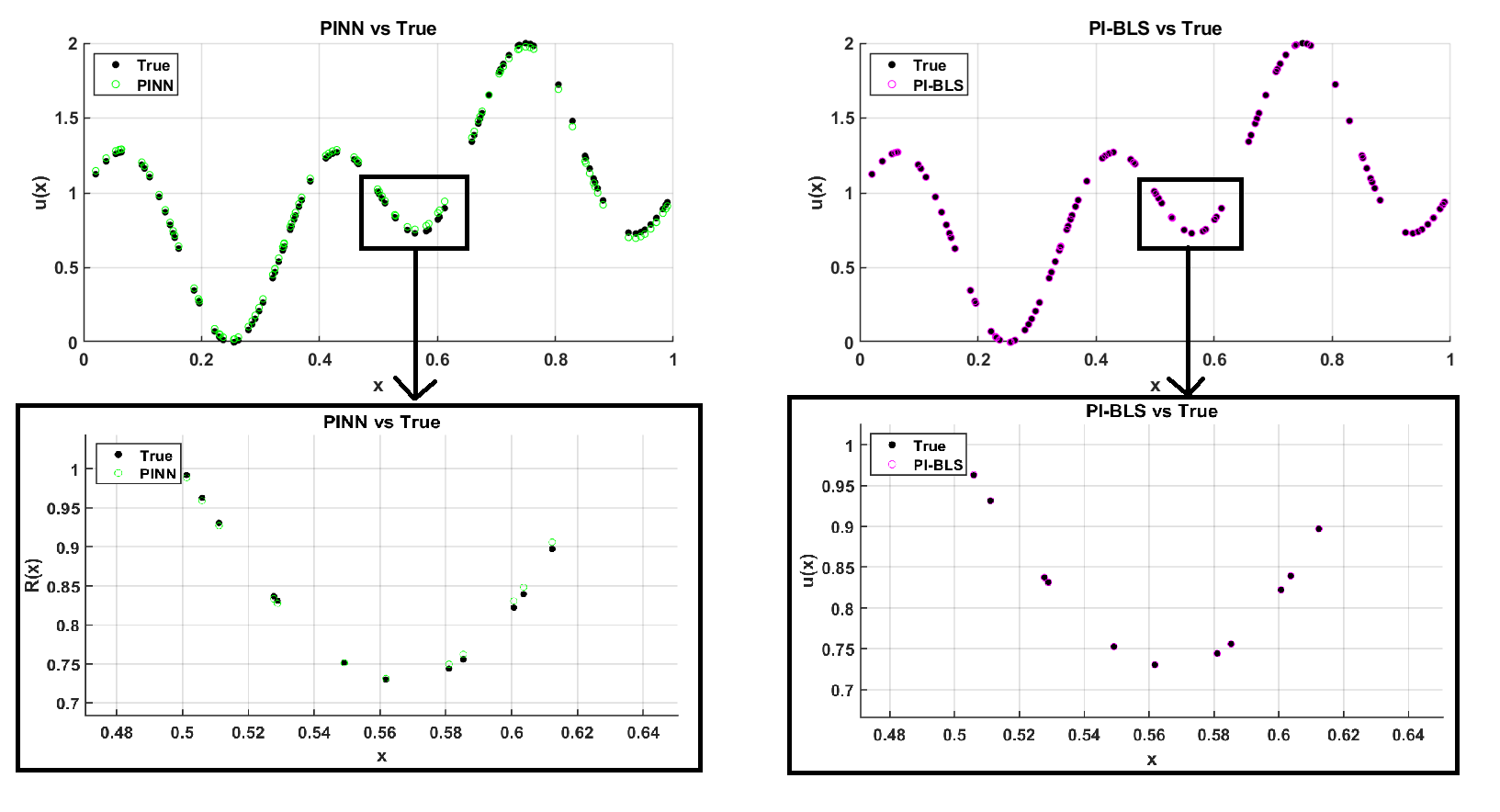}
        \subcaption{Approximate vs true solution}
    \end{minipage}
    \begin{minipage}{0.75\textwidth}
        \centering
        \includegraphics[width=\linewidth]{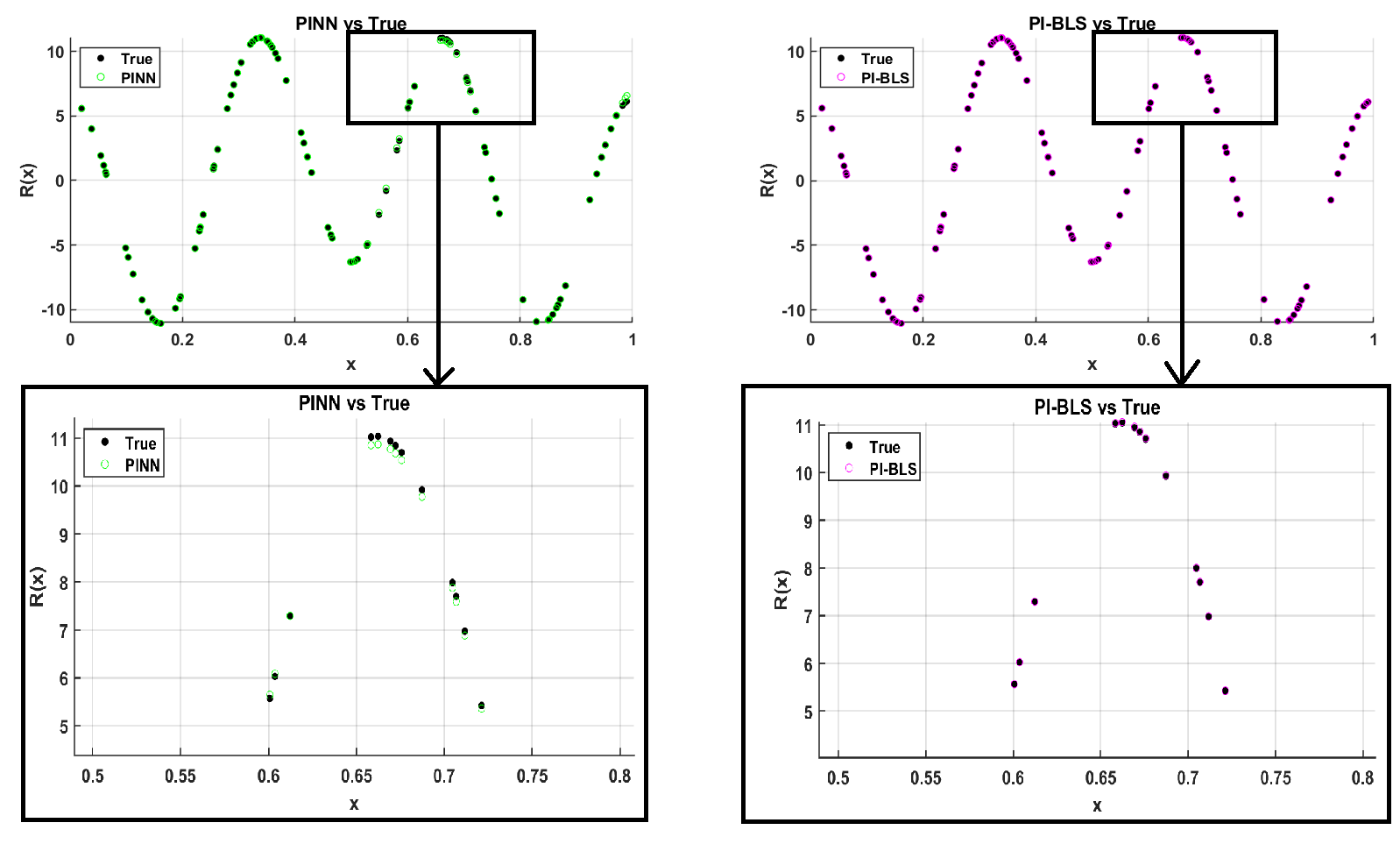}
        \subcaption{Approximate vs exact $R(x)$}
    \end{minipage}
    \begin{minipage}{0.5\textwidth}
        \centering
        \includegraphics[width=\linewidth]{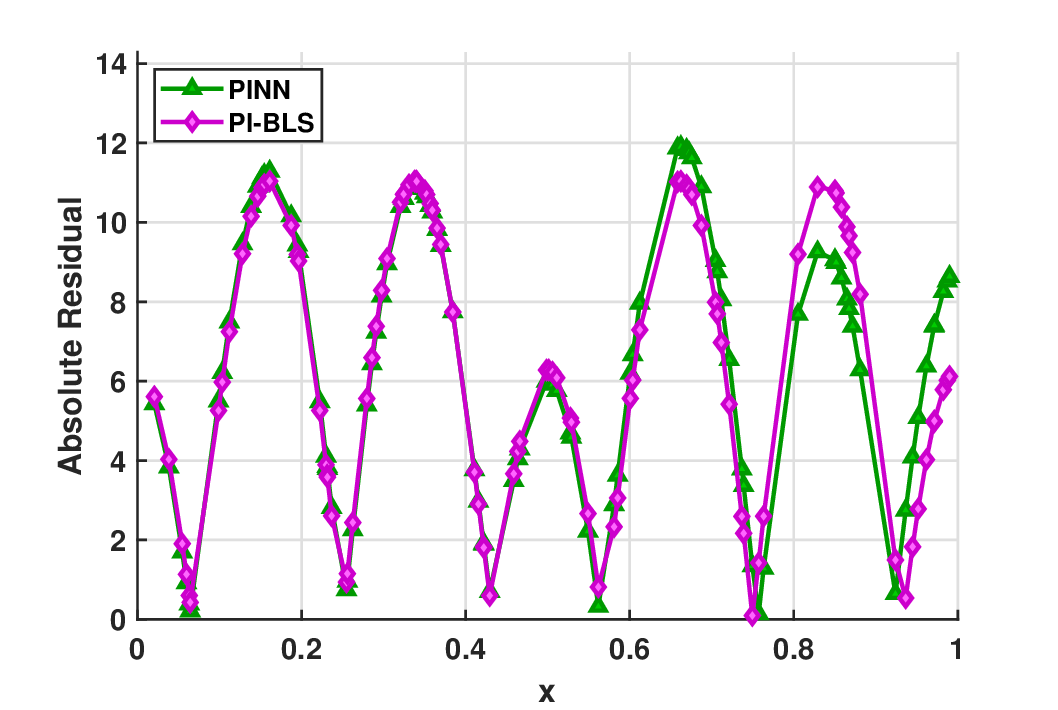}
        \subcaption{Absolute residual varying $x$}
    \end{minipage}
    \caption{Comparison of the approximate solution vs true solution for the advection equation.}
    \label{fig:advection_comparison}
\end{figure}
%
Figure~\ref{fig:advection_error_metrics} illustrates the variation of the training and testing MSE, RMSE, and MAE with different numbers of training collocation points. The proposed PI-BLS consistently yields the smallest errors for all metrics, with both training and testing errors decreasing rapidly as the number of collocation points increases and eventually converging to very small values. In contrast, PINN, PI-ELM, and B-PI-ELM exhibit comparatively larger errors throughout the experiments, indicating slower convergence and reduced prediction accuracy. These results further confirm the effectiveness and scalability of PI-BLS for physics-informed learning.

Figure~\ref{fig:advection_comparison} provides a qualitative comparison of the approximate solutions obtained by PI-BLS and PINN with the corresponding exact solution. As shown in Fig.~\ref{fig:advection_comparison}(a), both methods successfully capture the overall solution profile; however, the PI-BLS solution exhibits a closer agreement with the exact solution across the entire computational domain. The comparison of the computed and exact right-hand side $R(x)$ in Fig.~\ref{fig:advection_comparison}(b) further demonstrates that PI-BLS more accurately reproduces the governing physical relationship than PINN. This observation is corroborated by the absolute residual distributions shown in Fig.~\ref{fig:advection_comparison}(c), where PI-BLS consistently produces smaller residuals, indicating a more accurate satisfaction of the governing equation and enhanced numerical reliability.
%



   \subsection{Discussion: PI-BLS for Two-Dimensional Steady Poisson Equation}
We  consider a $2\mathrm{D}$ Poisson problem \cite{gradientPINN} \[-(\bm{u}_{xx}+ \bm{u}_{yy})=R(x,y), ~~(x,y)\in [0,1]^2,\]
where $R$ is obtained from the exact solution 
\[\bm{u}(x,y)= 2^{4a} x^{a}(1-x)^ay^a(1-y^a).\]
We consider the zero Dirichlet boundary conditions on the boundary.
    %

The PI-BLS loss functions are given as follows:
\begin{align}\label{erro_PDE_2D}
\text{PDE loss:}&~~~\mathcal{J}_{u}= \widetilde{\bm{u}}_{xx}(x, y)+\widetilde{\bm{u}}_{yy}(x,y)-{R}(x, y)~~\text{on}~~\{x_{i, u}, y_{i, u}\}_{i=1}^{N_c}. \\
\text{Boundary loss:}&~~~ \mathcal{J}_{b}=\widetilde{\bm{u}}(x,y)-B(x,y)~~ \text{on}~~\{x_{i,b}, y_{i,b}\}_{i=1}^{N_b},
  \end{align}
  where the input $X=[x,\, y].$ 
The formulation for the coefficient matrices is given by:
\begin{align*}
    H_{\mathrm{col}} &= \frac{\partial^2 \mathcal{G}}{\partial x^2}(X_{\mathrm{col}}) + \frac{\partial^2 \mathcal{G}}{\partial y^2}(X_{\mathrm{col}})\\
    &= \bmatrix{-(\textbf{F}_{xx,\text{col}} + \textbf{F}_{yy,\text{col}}) & -( \textbf{E}_{xx,\text{col}} +  \textbf{E}_{yy,\text{col}}) },\\
     H_b&=\bmatrix{E(X_b) & F(X_{b})}. 
\end{align*}

\begin{table}[!t]
\centering
\caption{Comparison of performance metrics for PI-BLS and PINN for the Poisson equation in 2D problem with 100 training collocation points.}
\label{tab:poisson2d_results}
\renewcommand{\arraystretch}{1.15}
\resizebox{9cm}{!}{
\begin{tabular}{lcccc}
\toprule
& \multicolumn{2}{c}{PI-BLS} & \multicolumn{2}{c}{PINN} \\
\cmidrule(r){2-3}\cmidrule(l){4-5}
\textbf{Parameter} & \textbf{Train} & \textbf{Test} & \textbf{Train} & \textbf{Test} \\
\midrule

MSE &
4.5950e-06 & 5.8687e-06 &
1.5526e-04 & 1.5591e-04 \\

RMSE &
2.1436e-03 & 2.4225e-03 &
1.2460e-02 & 1.2486e-02 \\

MAE &
1.6879e-03 & 1.8196e-03 &
9.0178e-03 & 8.5803e-03 \\


$\ell_{\infty}$  &
5.7685e-03 & 1.6726e-02 &
4.5706e-02 & 8.3972e-02 \\

Relative MSE &
3.3228e-06 & 3.9407e-06 &
1.1228e-04 & 1.0469e-04 \\

Relative RMSE &
1.8229e-03 & 1.9851e-03 &
1.0596e-02 & 1.0232e-02 \\

Relative MAE &
1.4842e-03 & 1.5411e-03 &
7.9297e-03 & 7.2672e-03 \\

Relative $\ell_{\infty}$  &
3.8460e-03 & 1.1151e-02 &
3.0473e-02 & 5.5982e-02 \\

$R^2$ Score &
9.9995e-01 & 9.9994e-01 &
9.9827e-01 & 9.9836e-01 \\

\bottomrule
\end{tabular}}
\end{table}

 Table~\ref{tab:poisson2d_results} compares the performance of the proposed PI-BLS and the conventional PINN for the two-dimensional Poisson equation. In the experiment, the number of training collocation points is set to $N_{\text{train}}=100$, while the testing dataset consists of $N_{\text{test}}=10,000$ collocation points. Further, in this example, we consider the error matrices $\ell_\infty$ Error, Relative $\ell_\infty$ Error, and $R^2$ Score. The results show that PI-BLS consistently outperforms PINN across all evaluation metrics. 
Specifically, PI-BLS achieves substantially lower MSE, RMSE, MAE,  and $\ell^\infty$ errors than PINN, indicating a more accurate approximation of the exact solution. On the testing dataset, the MSE is reduced from $1.5591\times10^{-4}$ to $5.8687\times10^{-6}$, representing more than one order of magnitude improvement. Likewise, the RMSE, MAE,  and $\ell^\infty$ errors are all approximately five times smaller than those obtained by PINN. Similar improvements are observed for the relative error metrics, while the $R^2$ score of PI-BLS ($0.99994$) remains closer to the ideal value of one than that of PINN ($0.99836$). These results demonstrate that PI-BLS provides superior approximation accuracy and generalization capability, even when trained using only $100$ collocation points.

\begin{figure}
    \centering
    \begin{minipage}{0.32\textwidth}
        \centering
        \includegraphics[width=\linewidth]{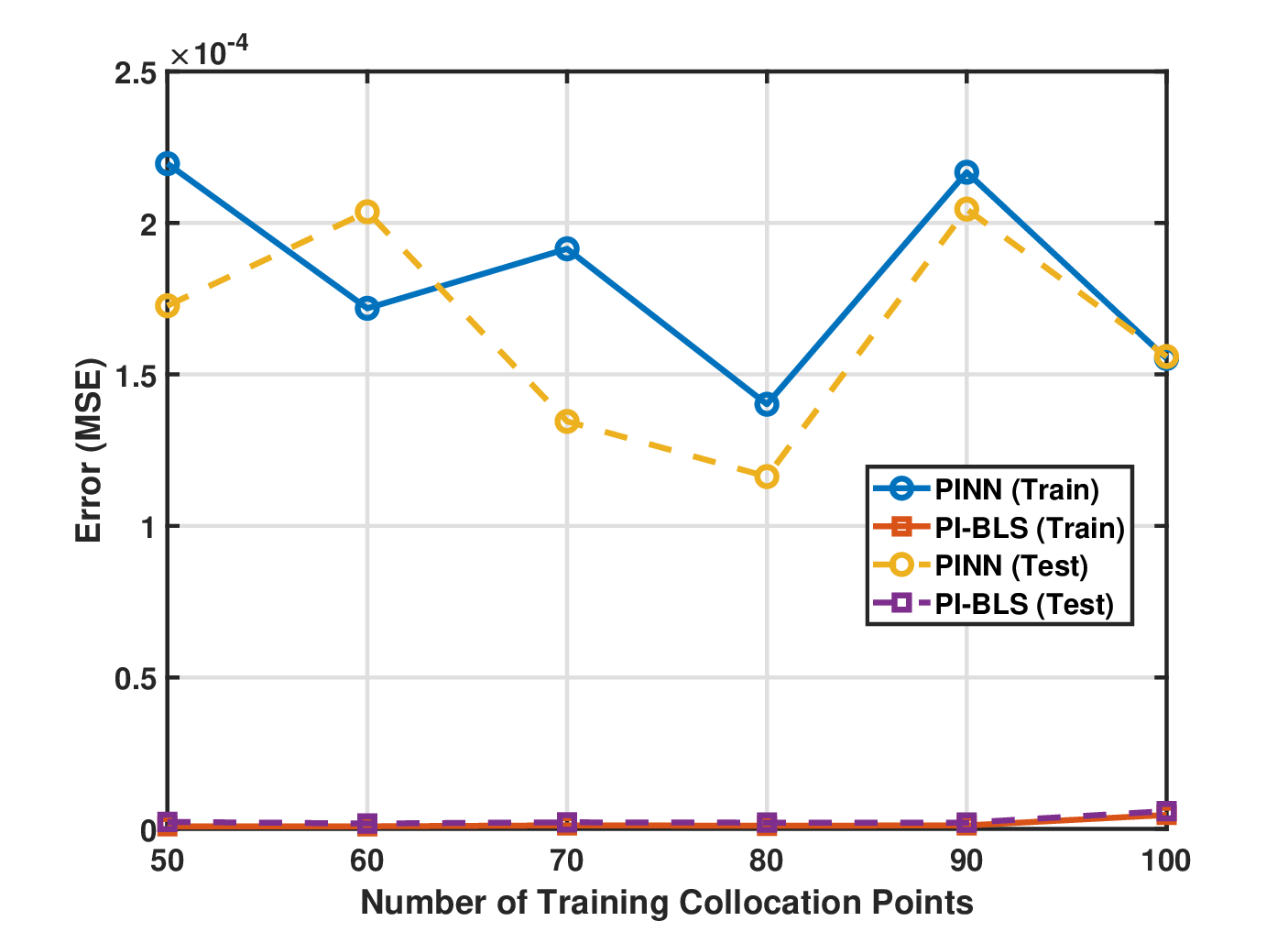}
        \caption*{MSE}
    \end{minipage}
    \begin{minipage}{0.32\textwidth}
        \centering
        \includegraphics[width=\linewidth]{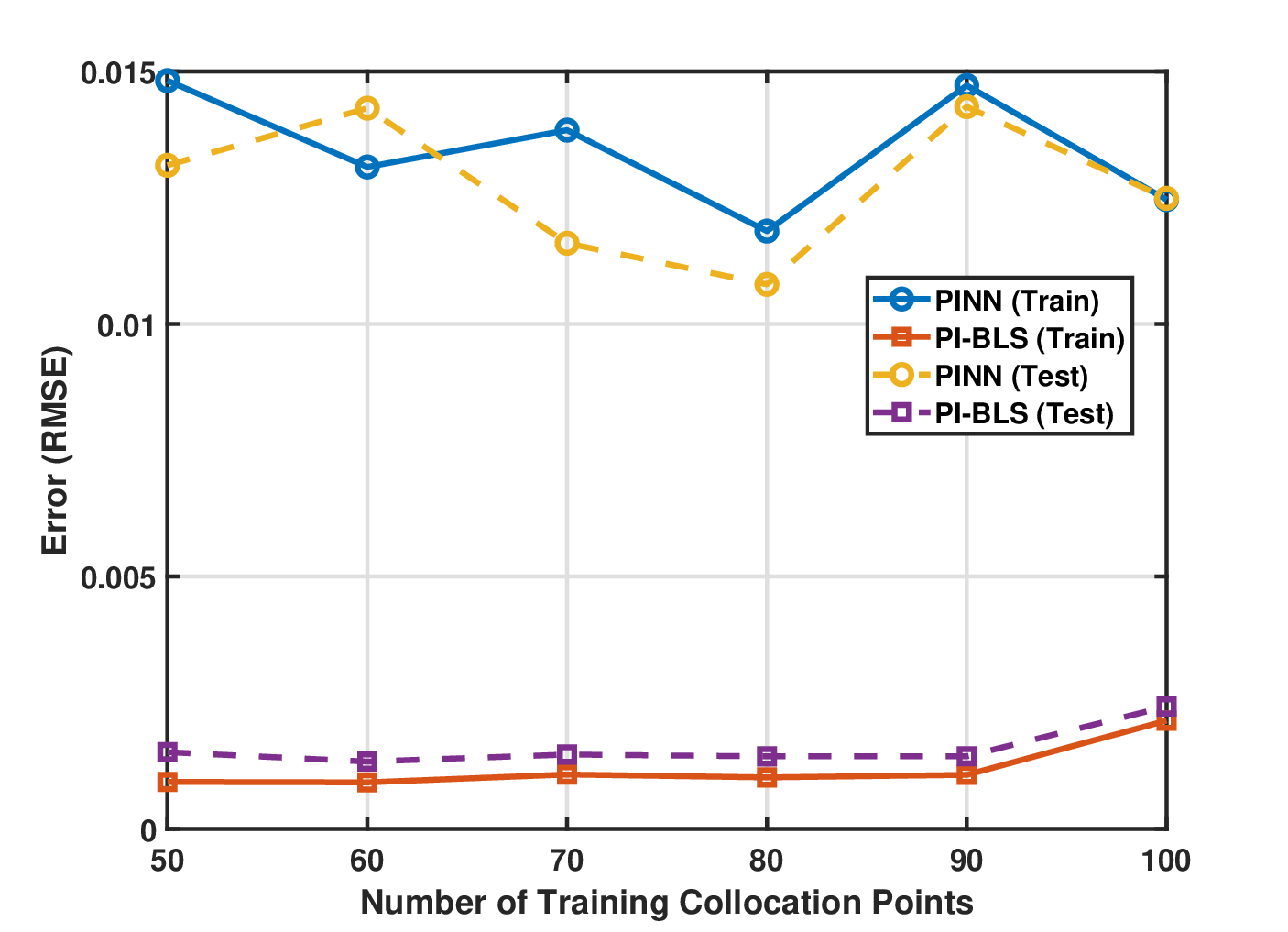}
        \caption*{RMSE}
    \end{minipage}
    \begin{minipage}{0.32\textwidth}
        \centering
        \includegraphics[width=\linewidth]{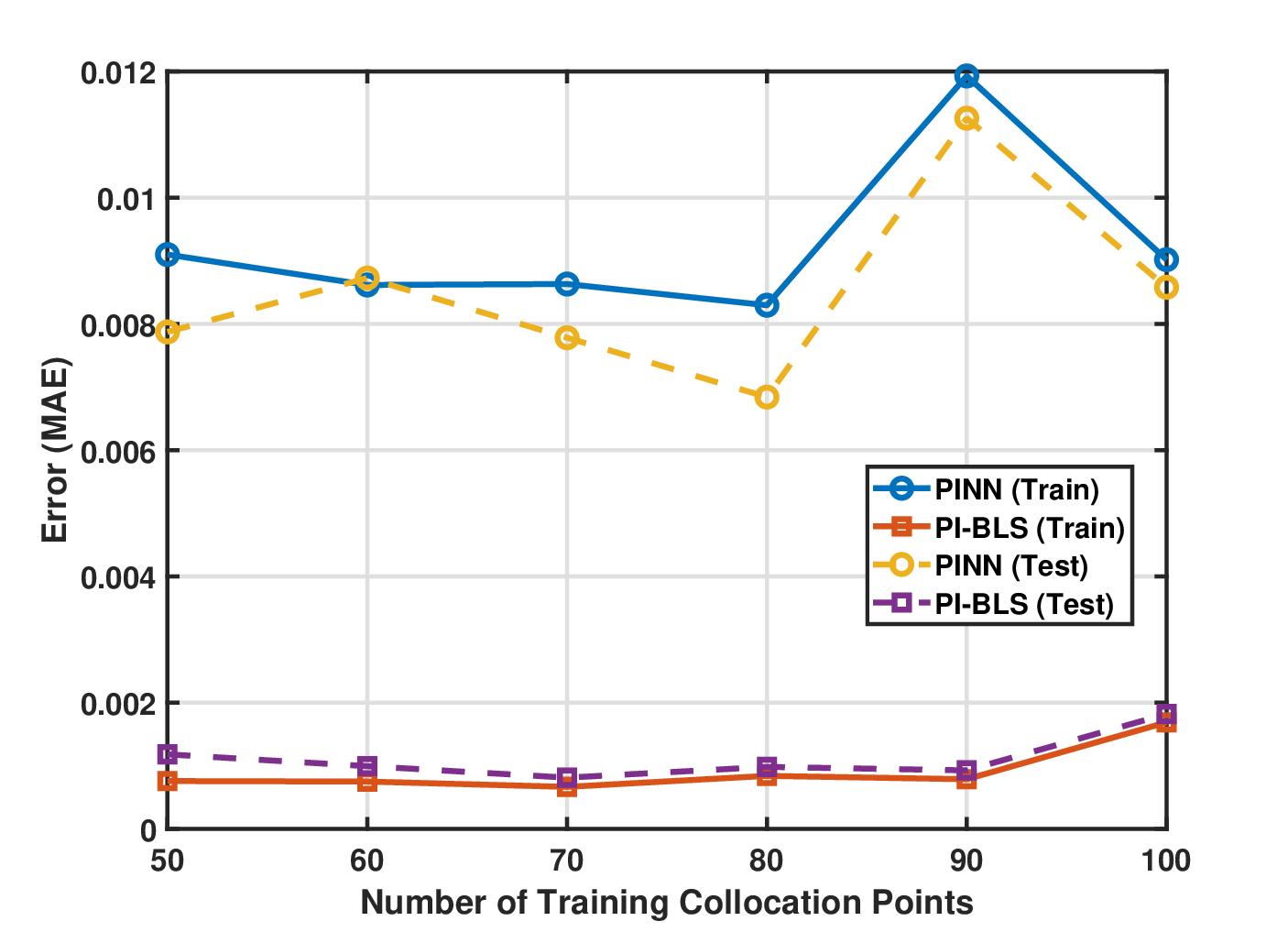}
        \caption*{MAE}
    \end{minipage}
   
    \caption{Comparison of various error metrics for the proposed PI-BLS and PINN methods across different numbers of training points for the Poisson 2D equation.}
    \label{fig:poisson2d}
\end{figure}

Figure~\ref{fig:poisson2d} further illustrates the variation of the MSE, RMSE, and MAE as the number of training collocation points increases from $50$ to $100$. Across all training sizes, PI-BLS consistently maintains significantly lower training and testing errors than PINN. Moreover, the training and testing curves of PI-BLS remain close to each other with relatively small fluctuations, indicating stable convergence, strong generalization, and robustness with respect to the number of training collocation points. These results confirm that the proposed PI-BLS is a more accurate and reliable physics-informed solver than the conventional PINN for the two-dimensional Poisson equation.

  \subsection{Discussion: Unsteady Case :- Diffusion–Reaction Equation }
   We consider the following one-dimensional unsteady diffusion--reaction problem described by the PDE taken as in  \cite{gradientPINN}:
    \begin{align}
        &\frac{\partial\bm{u}}{\partial t}= \mu \frac{\partial^2\bm{u}}{\partial x^2} + R(x,t), ~(x,t)\in [-\pi, \pi]\times [0, 1],
    \end{align}
where $\mu=1$ and \[R(x, t) = e^{-t} \left(\frac{3}{2} \sin (2x) + \frac{8}{3} \sin (3x) \frac{15}{4} \sin (4x) + \frac{63}{8} \sin (8x) \right).\] 
The initial and boundary conditions are given by 
\begin{align*}
    &\bm{u}(x, 0) = \sum_{i=1}^4 \frac{\sin ( i x)}{i} + \frac{\sin ( 8 x)}{8}\\
    & \bm{u}(-\pi, t) =\bm{u}(\pi, t)= 0.
\end{align*}
The exact solution of the PDE is given by \[\bm{u}(x, t) = e^{-t}\left(\sum_{i=1}^4 \frac{\sin ( i x)}{i} + \frac{\sin ( 8 x)}{8} \right).\]
The PI-BLS equations for these functions are given as follows:
\begin{align}\label{erro_PDE_DR}
\text{PDE loss:}&~~~\mathcal{J}_{u}= \widetilde{\bm{u}}_{t}(x, t)- \mu \,\widetilde{\bm{u}}_{x}(x,t)-{R}(x, t)~~\text{on}~~\{x_{i, u}, t_{i, u}\}_{i=1}^{N_c}. \\
\text{Boundary loss:}&~~~ \mathcal{J}_{b}=\widetilde{\bm{u}}(x,t)-B(x,t)~~ \text{on}~~\{x_{i,b}, t_{i,b}\}_{i=1}^{N_b}.\\
\text{Initial condition loss:}&~~~ \mathcal{J}_{I}=\widetilde{\bm{u}}(x,0)-\mathcal{I}(t)~~ \text{on}~~\{x_{i,I}\}_{i=1}^{N_I}.
  \end{align}
The formula for the coefficient matrix is given by:
\[H_{\mathrm{col}} = \frac{\partial\mathcal{G}}{\partial t} (X_{\mathrm{col}}) -\frac{\partial \mathcal{G}}{\partial x} (X_{\mathrm{col}})= \bmatrix{\textbf{F}_{t,\text{col}} -\mu \textbf{F}_{x,\text{col}} &  \textbf{E}_{t,\text{col}} -\mu \textbf{E}_{x,\text{col}}}. \]

\begin{table}[!t]
\centering
\caption{Comparison of performance metrics for PI-BLS, PINN, PI-ELM, and B-PI-ELM for diffusion-reaction for 100 training collocation points.}
\label{tab:diffusion_reaction}
\resizebox{12cm}{!}{
\begin{tabular}{lcccccccc}
\toprule
&
\multicolumn{2}{c}{PI-BLS} &
\multicolumn{2}{c}{PINN} &
\multicolumn{2}{c}{PI-ELM} &
\multicolumn{2}{c}{B-PI-ELM} \\
\cmidrule(lr){2-3}
\cmidrule(lr){4-5}
\cmidrule(lr){6-7}
\cmidrule(lr){8-9}
Parameter &
Train & Test &
Train & Test &
Train & Test &
Train & Test \\
\midrule
MSE &
8.2170e-02 & 8.7424e-02 &
2.3443e-01 & 3.1215e-01 &
1.8193e-01 & 2.2393e-01 &
1.8029e-01 & 2.5049e-01 \\

RMSE &
2.8665e-01 & 2.9568e-01 &
4.8418e-01 & 5.5870e-01 &
4.2653e-01 & 4.7321e-01 &
4.2460e-01 & 5.0049e-01 \\

MAE &
2.4843e-01 & 2.5334e-01 &
4.0089e-01 & 4.5368e-01 &
3.3576e-01 & 3.6853e-01 &
3.4144e-01 & 3.9102e-01 \\

Relative MSE &
3.7369e-01 & 2.8537e-01 &
1.0661e+00 & 1.0189e+00 &
7.1826e-01 & 7.3093e-01 &
8.1991e-01 & 8.1763e-01 \\

Relative RMSE &
6.1130e-01 & 5.3420e-01 &
1.0325e+00 & 1.0094e+00 &
8.4750e-01 & 8.5494e-01 &
9.0549e-01 & 9.0423e-01 \\

Relative MAE &
6.3950e-01 & 5.5976e-01 &
1.0320e+00 & 1.0024e+00 &
8.1043e-01 & 8.1427e-01 &
8.7894e-01 & 8.6396e-01 \\
\bottomrule
\end{tabular}}
\end{table}

\begin{figure}
    \centering
    \begin{minipage}{1\textwidth}
        \centering
        \includegraphics[width=\linewidth]{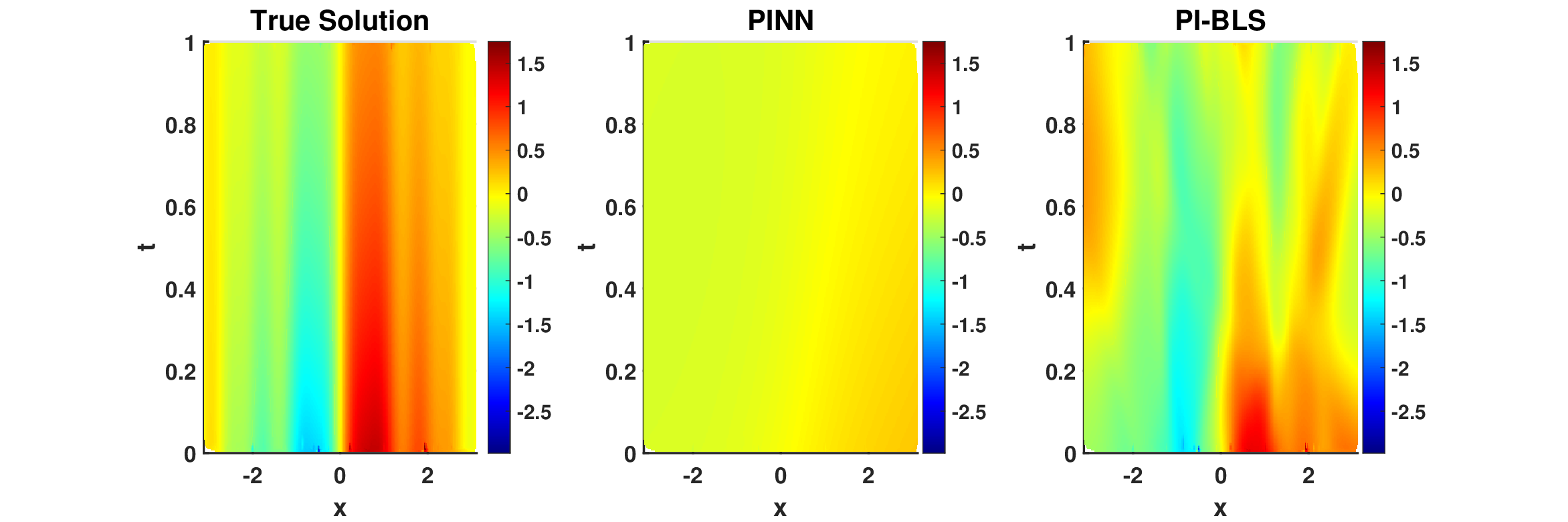}
        \subcaption{Approximate vs true solution}
    \end{minipage}
    \begin{minipage}{0.66\textwidth}
        \centering
        \includegraphics[width=\linewidth]{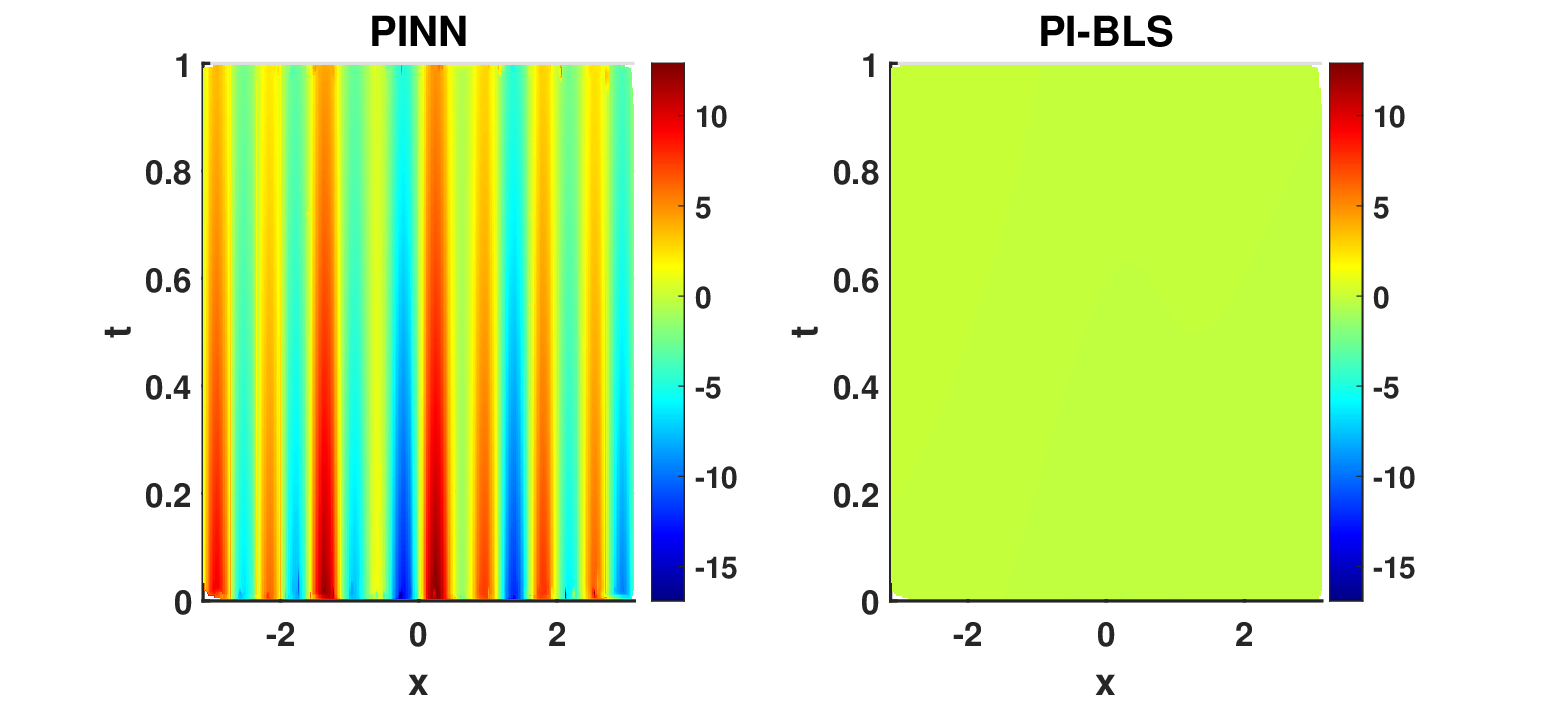}
        \subcaption{Absolute residual varying $x$}
    \end{minipage}
    \caption{Comparison of the approximate solution vs true solution for the diffusion-reaction equation for the number of training points equal 50.}
    \label{fig:diffusion_reaction_error}
\end{figure}

\begin{figure}
    \centering
    \begin{minipage}{0.32\textwidth}
        \centering
        \includegraphics[width=\linewidth]{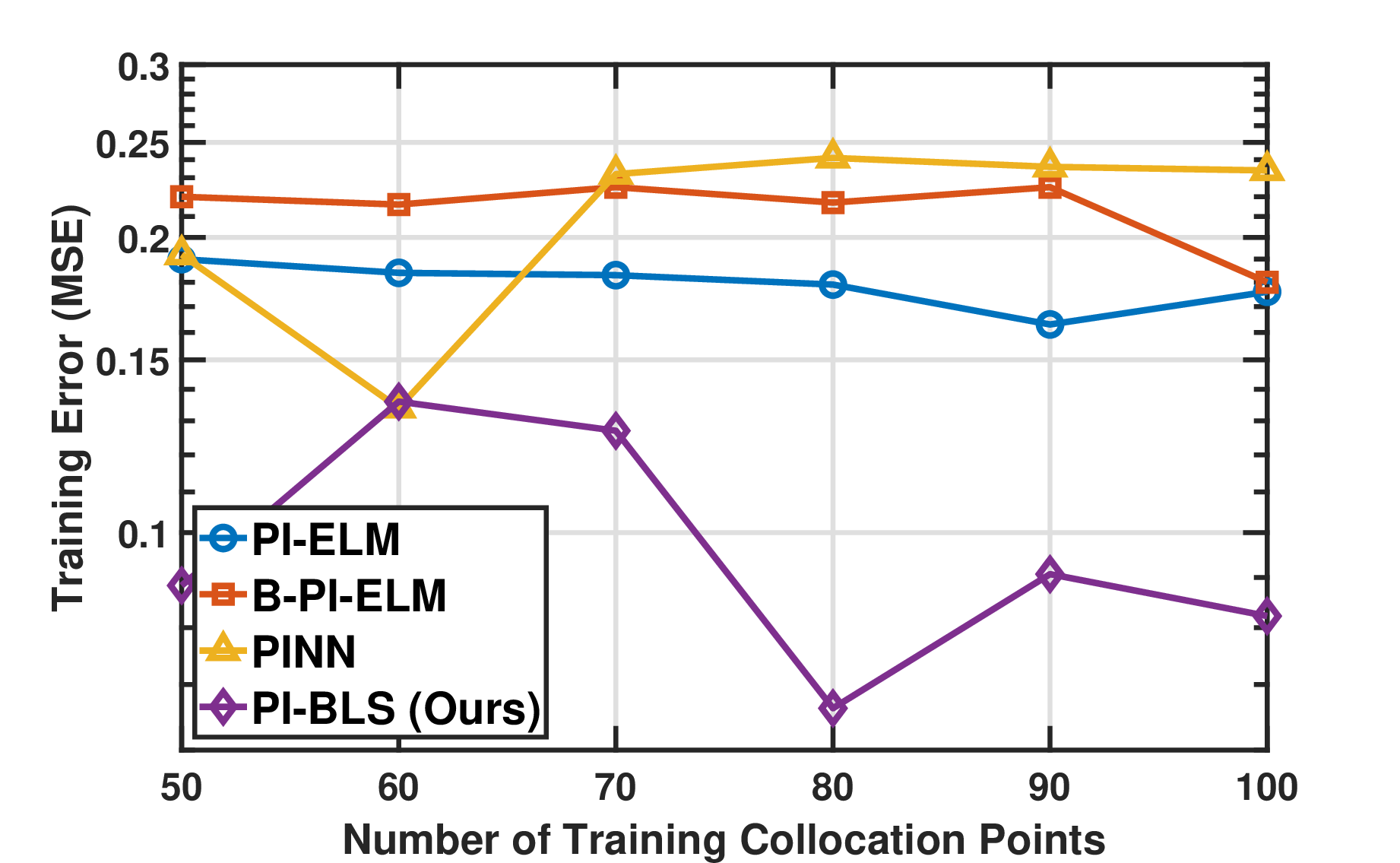}
        \caption*{MSE error in training}
    \end{minipage}
    \begin{minipage}{0.32\textwidth}
        \centering
        \includegraphics[width=\linewidth]{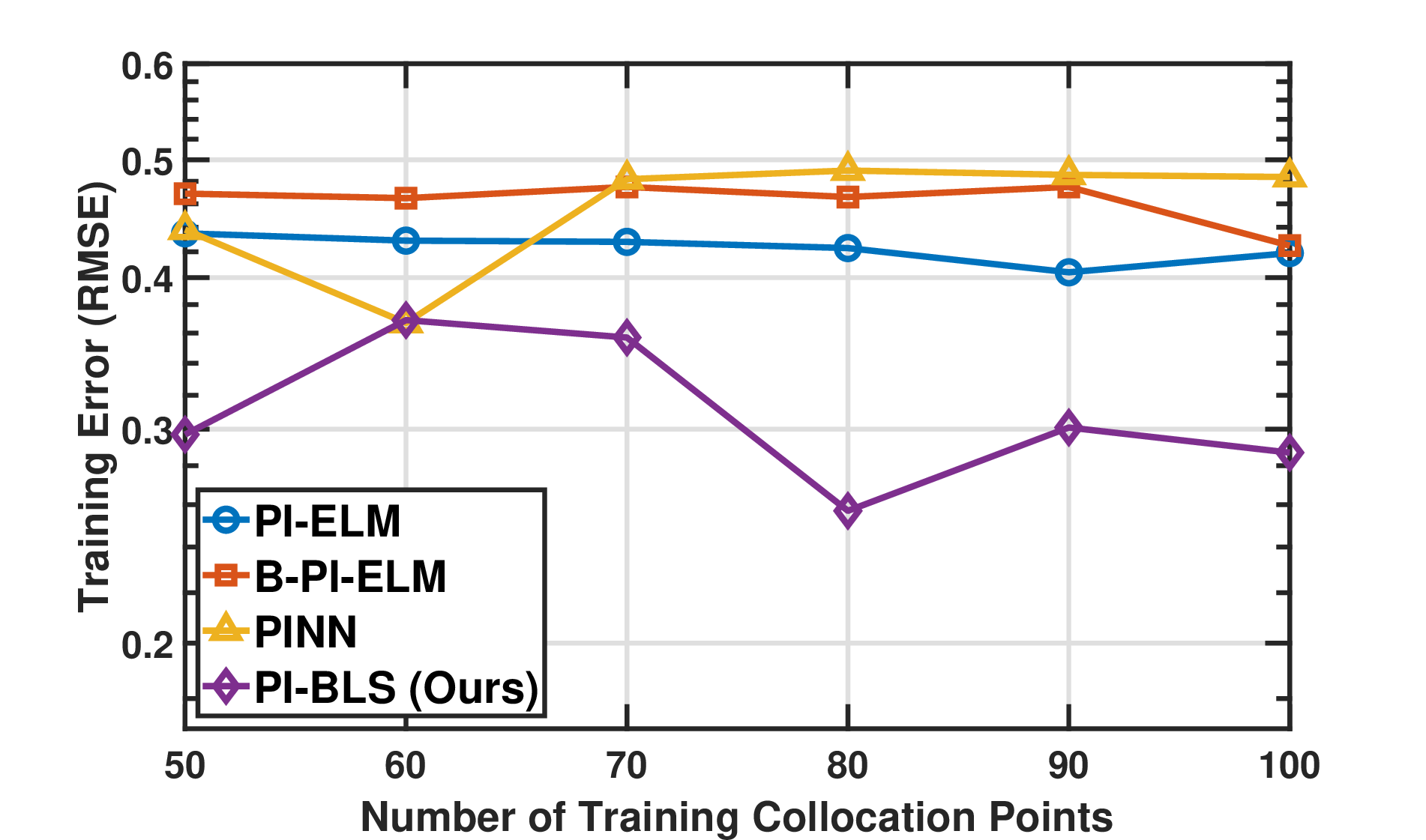}
        \caption*{RMSE error in training}
    \end{minipage}
    \begin{minipage}{0.32\textwidth}
        \centering
        \includegraphics[width=\linewidth]{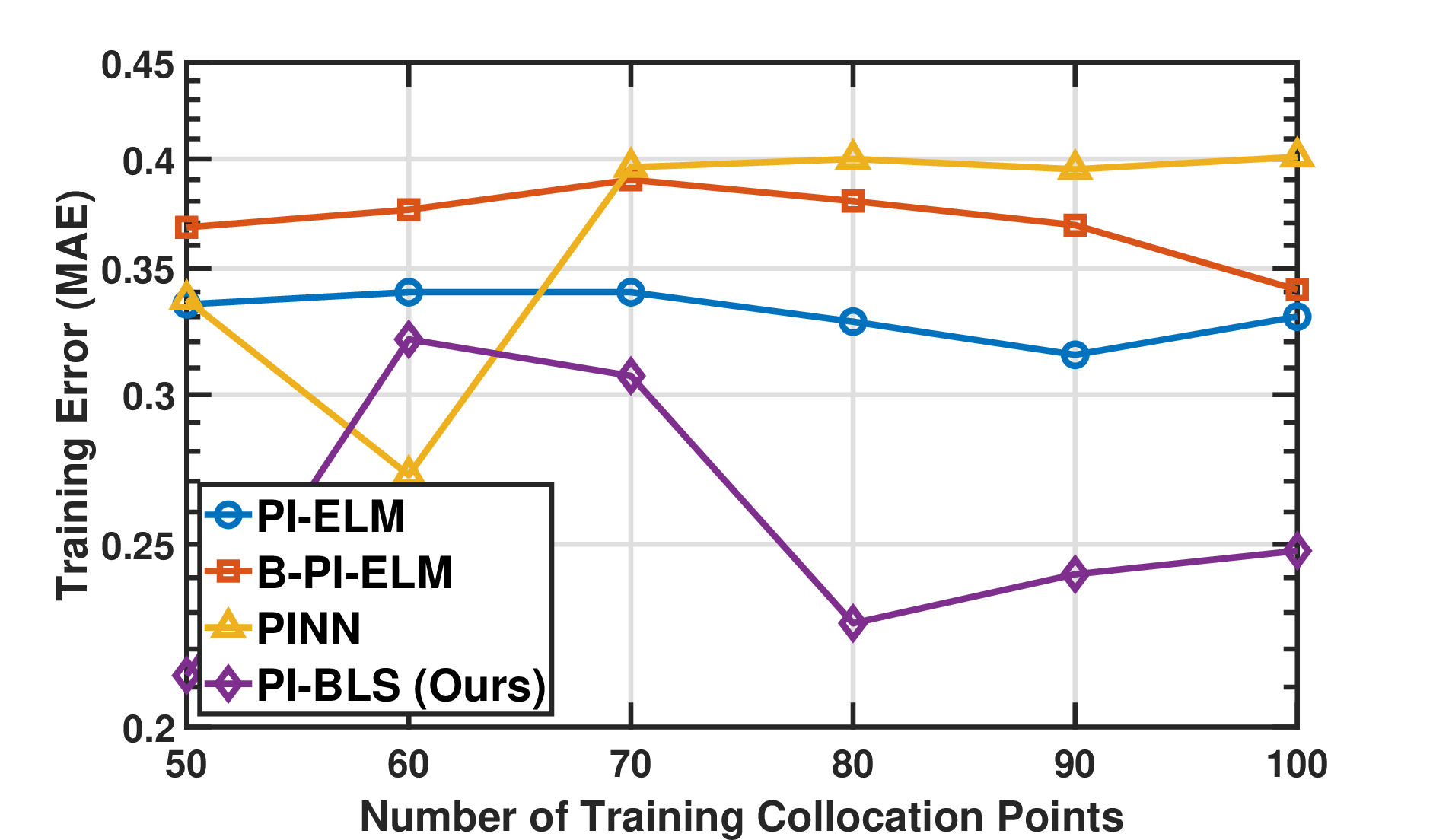}
        \caption*{MAE error in training}
    \end{minipage}

    \vspace{0.5cm} 

    \begin{minipage}{0.32\textwidth}
        \centering
        \includegraphics[width=\linewidth]{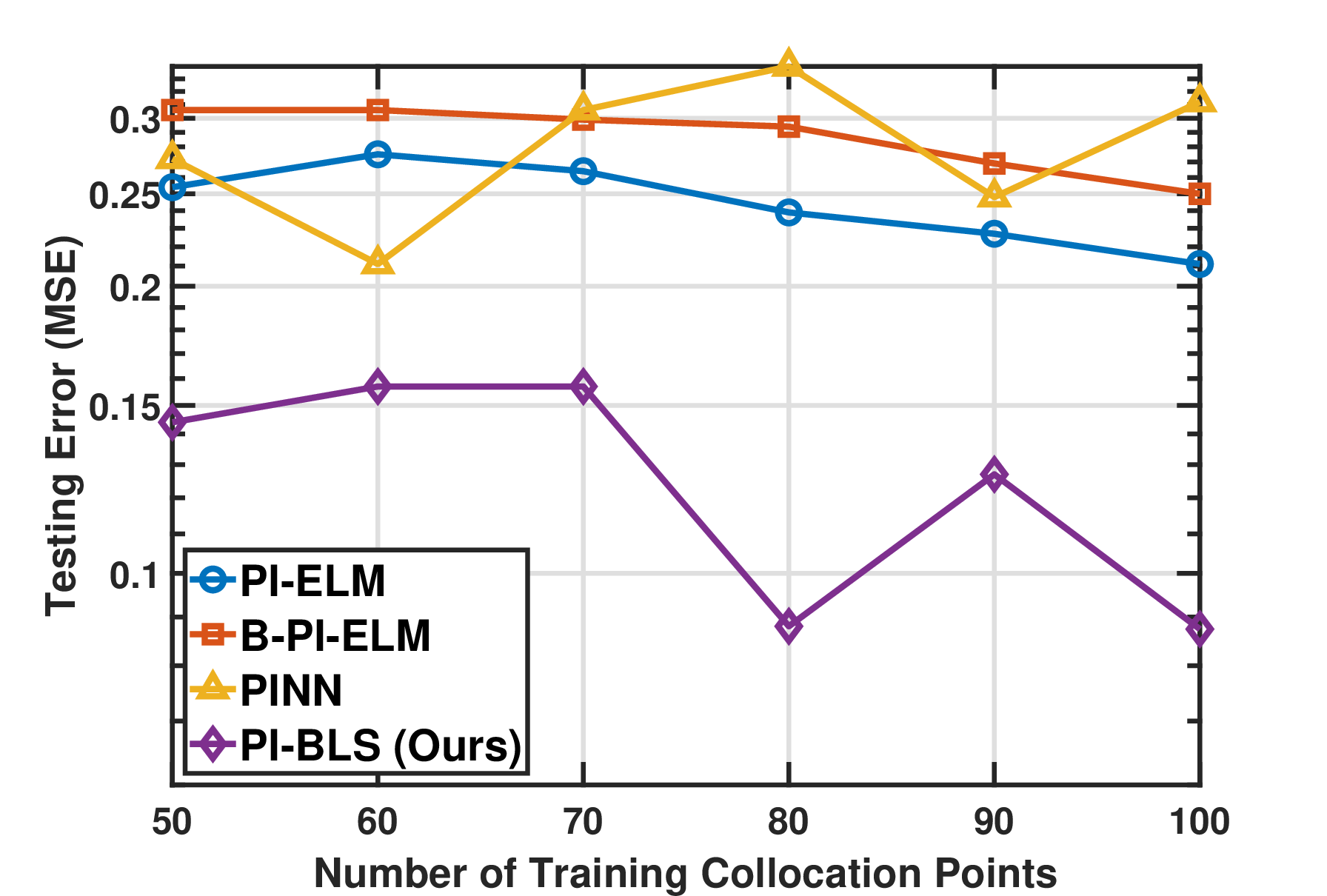}
        \caption*{MSE error in testing}
    \end{minipage}
    \begin{minipage}{0.32\textwidth}
        \centering
        \includegraphics[width=\linewidth]{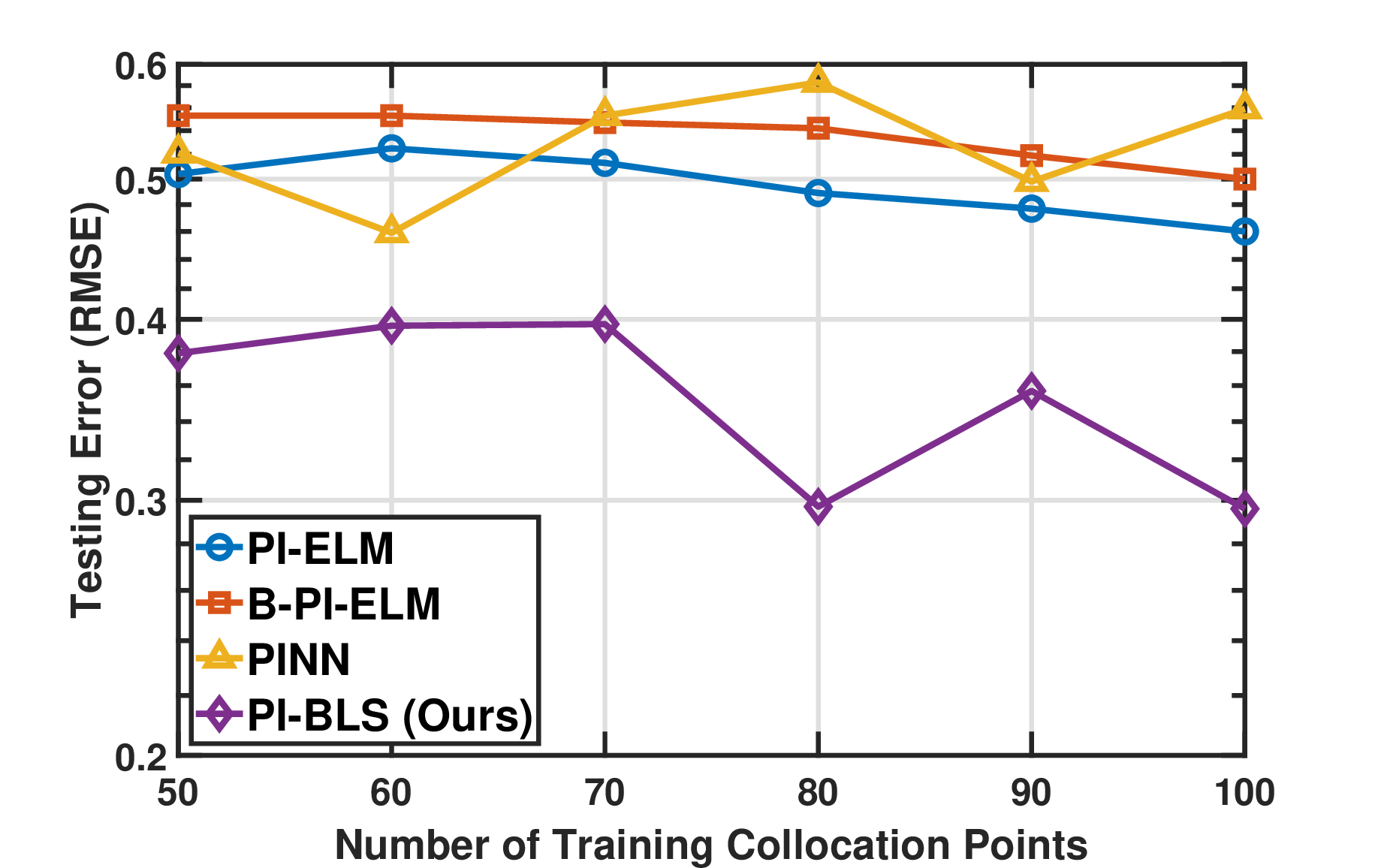}
        \caption*{RMSE error in testing}
    \end{minipage}
    \begin{minipage}{0.32\textwidth}
        \centering
        \includegraphics[width=\linewidth]{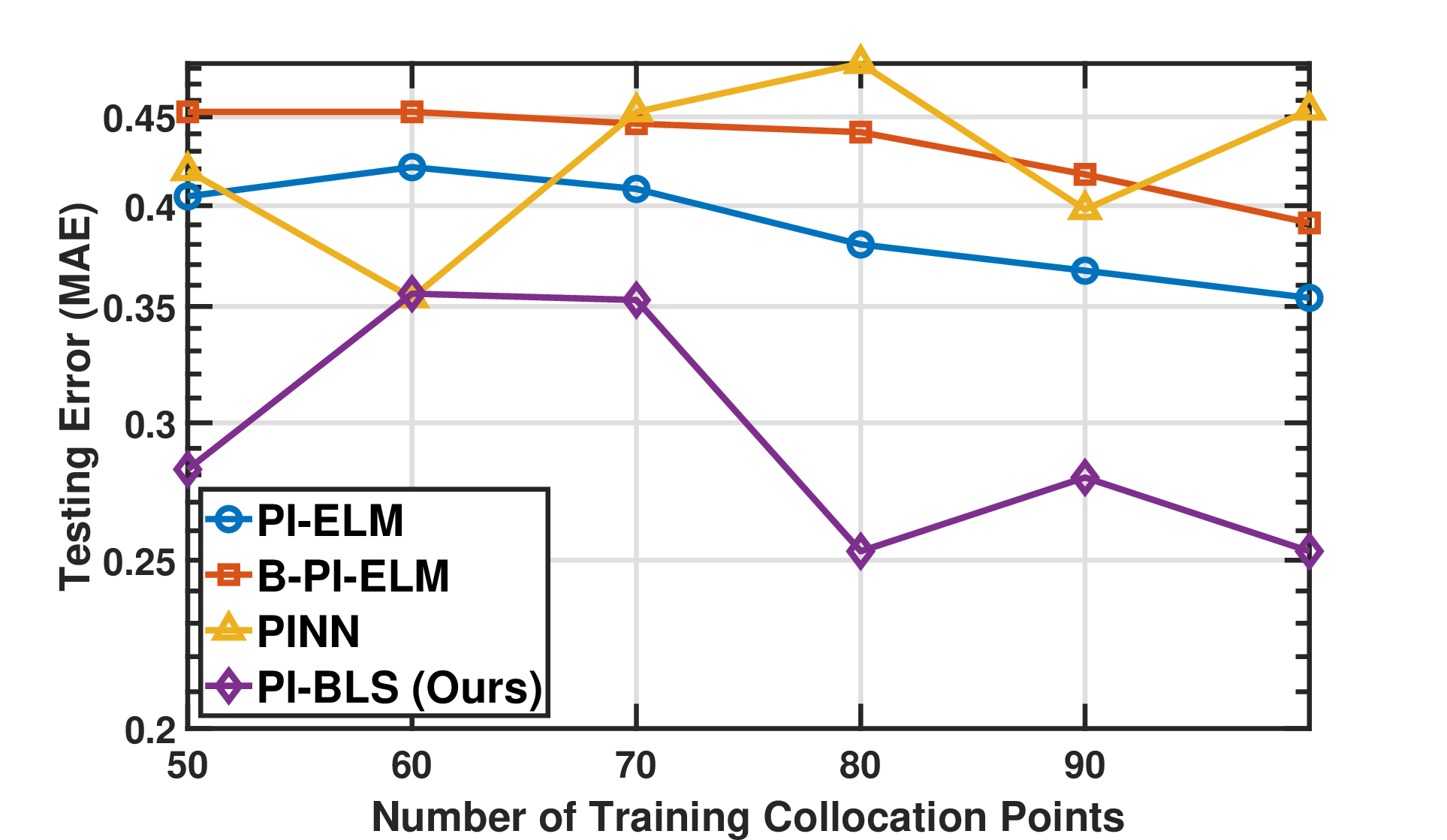}
        \caption*{MAE error in testing}
    \end{minipage}
    \caption{Comparison of various error metrics for the proposed PI-BLS, PINN, PI-ELM, and B-PI-ELM methods across different numbers of training points for the diffusion reaction.}
    \label{fig:comparision_diffusion_reaction}
    \end{figure}
Table~\ref{tab:diffusion_reaction} compares the performance of PI-BLS, PINN, PI-ELM, and B-PI-ELM for the diffusion--reaction equation using 100 training collocation points. Among all methods, PI-BLS consistently achieves the lowest training and testing errors across all evaluation metrics, including MSE, RMSE, MAE, Relative MSE, Relative RMSE, and Relative MAE. The close agreement between the training and testing errors also indicates good generalization capability. In contrast, PINN, PI-ELM, and B-PI-ELM exhibit comparatively larger errors, demonstrating that PI-BLS provides the most accurate and robust approximation for this problem.
    
Figure~\ref{fig:diffusion_reaction_error} compares the true solution, approximate solutions, and absolute residuals obtained by PINN and the proposed PI-BLS method for the diffusion--reaction equation using 50 training collocation points. As shown in Fig.~\ref{fig:diffusion_reaction_error}(a), the PI-BLS solution provides a much more accurate approximation to the true solution, successfully capturing the spatial and temporal variations of the exact profile.  In comparison, the PINN approximation exhibits noticeable deviations from the true solution in several regions of the computational domain. Further, the absolute residual distributions presented in Fig.~\ref{fig:diffusion_reaction_error}(b) highlight the superior performance of PI-BLS, which maintains consistently lower residuals across the computational domain. These results demonstrate that PI-BLS provides a more accurate and stable approximation than PINN, even with a relatively small number of training collocation points.

Figure \ref{fig:comparision_diffusion_reaction} compares the training and testing performance of PI-BLS with PI-ELM, BPI-ELM, and PINN using MSE, RMSE, and MAE across different numbers of training collocation points. The results consistently show that PI-BLS achieves the lowest error values for all three metrics in both the training and testing phases, demonstrating its superior approximation accuracy and better performance in solving the diffusion-reaction equation. In contrast, PI-ELM, BPI-ELM, and PINN exhibit comparatively higher error trends as the number of collocation points increases. These observations confirm that the proposed PI-BLS framework provides more accurate and robust solutions for the diffusion-reaction equation over a wide range of testing data sizes.

\subsection{CPU Time Comparison for Proposed PI-BLS vs PINN}
\begin{table}[ht!]
\centering
\caption{Comparison of training time and time per collocation point for PINN and PI-BLS across different test cases, averaged over 10 runs. The experiments use varying test data sizes, with 30 training collocation points for the advection equation and 500 for both Poisson 2D and diffusion-reaction.}
\label{tab:time_comparison}
\resizebox{\textwidth}{!}{
\begin{tabular}{lcccccc}
\toprule
 & \multicolumn{2}{c}{Advection equation} & \multicolumn{2}{c}{Poisson equation} & \multicolumn{2}{c}{Diffusion-reaction equation} \\
\cmidrule(lr){2-3} \cmidrule(lr){4-5} \cmidrule(lr){6-7}
 & PINN & PI-BLS & PINN & PI-BLS & PINN & PI-BLS \\
\midrule
Training Time (s) & 5.0670 & 3.8257 & 7.7663 & 7.2153 & 6.4458 & 4.9763 \\
Time per Collocation Point (s) & $1.69 \times 10^{-1}$ & $1.28 \times 10^{-1}$ & $1.55 \times 10^{-2}$ & $1.44 \times 10^{-2}$ & $1.29 \times 10^{-2}$ & $9.95 \times 10^{-3}$ \\
\bottomrule
\end{tabular}
}

\end{table}

Table~ \ref{tab:time_comparison} compares the computational efficiency of PINN and the proposed PI-BLS for all three test problems in terms of total training time in seconds (s) and time per collocation point. The results show that PI-BLS consistently outperforms PINN, reducing the total training time from 6.4458~s to 4.9763~s, corresponding to an improvement of approximately 22.8\%. Likewise, the average computation time per collocation point for the diffusion-reaction equation decreases from $1.29 \times 10^{-2}$~s for PINN to $9.95 \times 10^{-3}$~s for PI-BLS, representing a reduction of about 22.9\%. These findings demonstrate that PI-BLS provides a more computationally efficient framework while preserving solution accuracy, making it a promising alternative to PINNs for solving PDEs.

\begin{table*}[!t]
\centering
\caption{Comparison of the number of trainable parameters between PINN and the proposed PI-BLS under the optimal hyperparameter configuration. The reduction factor (Red.) is computed as $\text{PINN}/\text{PI-BLS}$. The ``$\times$'' column represents the parameter reduction factor ($\frac{\text{PINN}}{\text{PI-BLS}}$), i.e., the number of times fewer trainable parameters are required by PI-BLS than by PINN.}
\label{tab:parameter_comparison}
\resizebox{\textwidth}{!}{
\begin{tabular}{c|cccc|c|cccc|c|cccc}
\hline
\multicolumn{5}{c|}{\textbf{Advection Equation}}
&
\multicolumn{5}{c|}{\textbf{Poisson equation}}
&
\multicolumn{5}{c}{\textbf{Diffusion--reaction equation}}\\
\cline{1-15}

$\mathbf{N_{\rm train}}$
&
PINN
&
PI-BLS
&
Red. (\%)
&
$\times$
&
$\mathbf{N_{\rm train}}$
&
PINN
&
PI-BLS
&
Red. (\%)
&
$\times$
&
$\mathbf{N_{\rm train}}$
&
PINN
&
PI-BLS
&
Red. (\%)
&
$\times$
\\
\hline

5  &34405&455&98.68&75.62
&
50 &42225&155&99.63&272.42
&
50 &27385&755&97.24&36.27\\

10 &21165&355&98.32&59.62
&
60 &42225&155&99.63&272.42
&
60 &27385&235&99.14&116.53\\

15 &34405&495&98.56&69.51
&
70 &42225&155&99.63&272.42
&
70 &34405&235&99.32&146.40\\

20 &34405&235&99.32&146.40
&
80 &21165&155&99.27&136.55
&
80 &27385&735&97.32&37.26\\

25 &34405&235&99.32&146.40
&
90 &21165&155&99.27&136.55
&
90 &21165&735&96.53&28.80\\

30 &42022&235&99.44&178.82
&
100&21165&155&99.27&136.55
&
100&27385&235&99.14&116.53\\

\hline
\end{tabular}}
\end{table*}

\subsection{Comparison of Trainable Parameters for Proposed PI-BLS vs PINN}

Table~\ref{tab:parameter_comparison} presents a comparison of the number of trainable parameters between the conventional PINN and the proposed PI-BLS under their respective optimal hyperparameter configurations. It is evident that PI-BLS consistently requires substantially fewer trainable parameters than PINN across all three benchmark problems. Specifically, PI-BLS employs only $155$--$755$ trainable parameters, whereas the corresponding PINN models require between $21,\!165$ and $42,\!225$ trainable parameters.

For the one-dimensional advection equation, PI-BLS reduces the number of trainable parameters by approximately $98.3\%$--$99.4\%$, requiring only $235$--$495$ parameters compared with $21,\!165$--$42,\!022$ parameters for PINN. This corresponds to a reduction factor ranging from approximately $60\times$ to $179\times$. For the two-dimensional Poisson equation, PI-BLS consistently utilizes only $155$ trainable parameters for all training configurations, achieving a parameter reduction of approximately $99.3\%$--$99.6\%$, or equivalently, requiring about $137\times$ to $272\times$ fewer trainable parameters than PINN. Similarly, for the diffusion--reaction equation, PI-BLS requires only $235$--$755$ trainable parameters, yielding a reduction of $96.5\%$ to $99.3\%$ compared to the corresponding PINNs, which translates to approximately $29\times$ to $146\times$ fewer trainable parameters.

The remarkable reduction in trainable parameters originates from the underlying learning mechanism of PI-BLS. Unlike PINNs, where all network weights and biases are optimized through iterative backpropagation, PI-BLS fixes the randomly generated feature and enhancement nodes and optimizes only the output-layer parameters through a linear least-squares formulation. Consequently, the optimization problem is significantly smaller, resulting in a drastic reduction in the number of unknown variables. This compact parameterization not only lowers the optimization complexity but also provides a explanation for the superior computational efficiency and reduced training time of PI-BLS demonstrated in the subsequent experiments.

\section{Conclusion and Future Directions}\label{SEC:Conclusion}

This paper presented a novel PI-BLS for solving forward PDEs by combining the computational efficiency of broad RdNNs with the physics-informed learning paradigm. Unlike conventional PINNs, the proposed framework reformulates the physics-informed learning problem as a linear least-squares optimization task, thereby eliminating the need for iterative gradient-based backpropagation while preserving the governing physical laws through the differential equation and the associated boundary and initial conditions. Furthermore, we established the theoretical foundations of the proposed framework by analyzing its stability, residual error bound, sensitivity to perturbations, and physics consistency, providing rigorous guarantees for the robustness and reliability of PI-BLS.

The effectiveness of the proposed framework was extensively evaluated on three representative benchmark problems, namely the one-dimensional advection equation, the two-dimensional Poisson equation, and the diffusion--reaction equation. Experimental comparisons with PINN, PI-ELM, and B-PI-ELM consistently demonstrated the superiority of PI-BLS in terms of both predictive accuracy and computational efficiency. Across different benchmark problems, PI-BLS achieved lower training and testing errors while requiring substantially fewer trainable parameters than conventional PINNs. In particular, PI-BLS reduced the number of trainable parameters by approximately $96.5\%$--$99.6\%$, corresponding to up to $272\times$ fewer optimization variables. This compact formulation also translated into significant computational gains, where the proposed framework reduced the average training time by approximately $24.6\%$, $46.3\%$, and $24.0\%$ for the advection, Poisson, and diffusion--reaction equations, respectively. Moreover, PI-BLS consistently produced lower approximation errors across all benchmark problems; for example, in the two-dimensional Poisson problem, it reduced the testing RMSE from $1.2486\times10^{-2}$ to $2.4225\times10^{-3}$, representing an improvement of approximately $80.6\%$ over the conventional PINN. Overall, the proposed PI-BLS establishes an efficient and scalable alternative to conventional gradient-based physics-informed learning.

\textbf{Limitations and Future Work:} The proposed PI-BLS framework has some limitations that motivate future research:
\begin{itemize}
    \item The current formulation is designed for forward PDEs with known governing equations. Extending PI-BLS to inverse PDE problems and parameter identification constitutes an important future research direction.
    
    \item The present framework is primarily validated on linear PDEs. Extending PI-BLS to nonlinear PDEs will require solving nonlinear least-squares optimization problems, for which efficient iterative solvers such as the Gauss--Newton or Levenberg--Marquardt methods may be incorporated.
    
    \item The randomized feature generation and dense least-squares formulation may become computationally demanding for large-scale or high-dimensional problems. Future work will investigate adaptive broad architectures, sparse representations, and parallel implementations to further improve scalability and efficiency.
\end{itemize}


\section*{Acknowledgment}
Pinki Khatun acknowledges the financial support provided by the Indian Institute of Technology Gandhinagar (IIT Gandhinagar) provided through the project grant No. IP/$52012$. This work was initiated while Pinki Khatun was a Postdoctoral Fellow at the IIT Gandhinagar, India. The author gratefully acknowledges the support and research environment provided by the institute during that period. M. Sajid acknowledges the Indian Institute of Technology Indore for providing financial support through the Translational Research Fellowship (TRF). Abhinav Jha acknowledges partial support by the IIT Gandhinagar Internal Project: IP/$52016$ and INSPIRE Faculty Fellowship Research Grant: DST/INSPIRE/$04$/$2024$/$000202$.

 \bibliographystyle{abbrvnat}
\bibliography{references.bib}
\end{document}